\def\proj{\text{P}}
\def\1{{\bf{1}}}
\def\0{{\bf{0}}}
\newcommand{\R}{\mathbb{R}}
\newcommand{\Z}{\mathbb{Z}}
\newcommand{\E}{\mathbb{E}}
\DeclareMathOperator*{\argmin}{arg\,min}
\DeclareMathOperator*{\Argmin}{Arg\,min}
\let\citet\undefined
\newcommand{\citet}{\cite}
\newcommand{\Sect}[1]{Section~\ref{#1}}
\newcommand{\Fig}[1]{Figure~\ref{#1}}
\newcommand{\Tbl}[1]{Table~\ref{#1}}
\newcommand{\Equ}[1]{Equation~(\ref{#1})}
\newcommand{\Alg}[1]{Algorithm~\ref{#1}}
\renewcommand{\paragraph}[1]{\textbf{#1}~~}
\newcounter{thm_counter}
\newcounter{def_counter}
\newtheorem{theorem}[thm_counter]{Theorem}
\newtheorem{lemma}[thm_counter]{Lemma}
\newtheorem{definition}[def_counter]{Definition}
\titlespacing*{\section} {0pt}{1.2ex plus 1ex minus .2ex}{1ex plus .2ex}
\titlespacing*{\subsection} {0pt}{1ex plus 1ex minus .2ex}{1ex plus .2ex}
\title{Energy-Constrained Compression for Deep Neural Networks via Weighted Sparse Projection and Layer Input Masking}
\author[1]{\textbf{Haichuan~Yang}}
\author[1]{\textbf{Yuhao~Zhu}}
\author[1,2]{\textbf{Ji~Liu}}
\affil[1]{Department of Computer Science, University of Rochester, Rochester, USA}
\affil[2]{Kwai AI Lab at Seattle, Seattle, USA}
\affil[ ]{\texttt{h.yang@rochester.edu, yzhu@rochester.edu, ji.liu.uwisc@gmail.com}}
\begin{document}

\maketitle

\begin{abstract}
Deep Neural Networks (DNNs) are increasingly deployed in highly energy-constrained environments such as autonomous drones and wearable devices while at the same time must operate in real-time. Therefore, reducing the energy consumption has become a major design consideration in DNN training. This paper proposes the first end-to-end DNN training framework that provides quantitative energy consumption guarantees via weighted sparse projection and input masking. The key idea is to formulate the DNN training as an optimization problem in which the energy budget imposes a previously unconsidered optimization constraint. We integrate the quantitative DNN energy estimation into the DNN training process to assist the constrained optimization. We prove that an approximate algorithm can be used to efficiently solve the optimization problem. Compared to the best prior energy-saving methods, our framework trains DNNs that provide higher accuracies under same or lower energy budgets. Code is publicly available\footnote{\url{https://github.com/hyang1990/model_based_energy_constrained_compression}}.
\end{abstract}

\section{Introduction}
\label{sec:intro}

Deep Neural Networks (DNNs) have become the fundamental building blocks of many emerging application domains such as computer vision~\citep{krizhevsky2012imagenet,simonyan2014very}, speech recognition~\citep{hinton2012deep}, and natural language processing~\citep{goldberg2016primer}. Many of these applications have to operate in highly energy-constrained environments. For instance, autonomous drones have to continuously perform computer vision tasks (e.g., object detection) without a constant power supply. Designing DNNs that can meet severe energy budgets has increasingly become a major design objective.

The state-of-the-art model compression algorithms adopt \emph{indirect} techniques to restrict the energy consumption, such as 
pruning (or sparsification)~\citep{he2018amc, han2015deep, liu2015sparse, zhou2016less, li2016pruning, wen2016learning} and quantization~\citep{gong2014compressing, wu2016quantized, han2015deep, courbariaux2015binaryconnect, rastegari2016xnor}. These techniques are agonistic to energy consumption; rather they are designed to reduce the amount of computations and the amount of model parameters in a DNN, which do not truly reflect the energy consumption of a DNN. As a result, these indirect approaches only \textit{indirectly} reduce the total energy consumption. Recently, Energy-Aware Pruning (EAP)~\citep{yang2017designing} proposes a more direct manner to reduce the energy consumption of DNN inferences by guiding weight pruning using DNN energy estimation, which achieves higher energy savings compared to the indirect techniques.

However, a fundamental limitation of all existing methods is that they do not provide quantitative~\textit{energy guarantees}, i.e., ensuring that the energy consumption is below a user-specified energy budget. In this paper, we aspire to answer the following key question: how to design DNN models that \emph{satisfy a given energy budget while maximizing the accuracy?} This work provides a solution to this question through an end-to-end training framework. By end-to-end, we refer to an approach that directly meets the energy budget without relying heuristics such as selectively restoring pruned weights and layer by layer fine-tuning~\citep{han2015learning,yang2017designing}. These heuristics are effective in practice but also have many hyper-parameters that must be carefully tuned.

 

Our learning algorithm directly trains a DNN model that meets a given energy budget while maximizing model accuracy without incremental hyper-parameter tuning. The key idea is to formulate the DNN training process as an optimization problem in which the energy budget imposes a previously unconsidered optimization constraint. We integrate the quantitative DNN energy estimation into the DNN training process to assist the constrained optimization. In this way, a DNN model, once is trained, by design meets the energy budget while maximizing the accuracy.

Without losing generality, we model the DNN energy consumption after the popular systolic array hardware architecture~\citep{kung1982systolic} that is increasingly adopted in today's DNN hardware chips such as Google's Tensor Processing Unit (TPU)~\citep{jouppi2017datacenter}, NVidia's Tensor Cores, and ARM's ML Processor. The systolic array architecture embodies key design principles of DNN hardware that is already available in today's consumer devices. We specifically focus on pruning, i.e., controlling the DNN sparsity, as the main energy reduction technique. Overall, the energy model models the DNN inference energy as a function of the sparsity of the layer parameters and the layer input.

Given the DNN energy estimation, we formulate DNN training as an optimization problem that minimizes the accuracy loss under the constraint of a certain energy budget. The key difference between our optimization formulation and the formulation in a conventional DNN training is two-fold. First, our optimization problem considers the energy constraint, which is not present in conventional training. Second, layer inputs are non-trainable parameters in conventional DNN training since they depend on the initial network input. We introduce a new concept, called input mask, that enables the input sparsity to be controlled by a trainable parameter, and thus increases the energy reduction opportunities. This lets us further reduce energy in scenarios with known input data pattern.


We propose an iterative algorithm to solve the above optimization problem. A key step in optimization is the projection operation onto the energy constraint, i.e., finding a model which is closest to the given (dense) model and satisfies the energy constraint. We prove that this projection can be casted into a $0/1$ knapsack problem and show that it can be solved very efficiently. Evaluation results show that our proposed training framework can achieve higher accuracy under the same or lower energy compared to the state-of-the-art energy-saving methods.





In summary, we make the following contributions in this paper:
\begin{itemize}[itemsep=2pt,topsep=0pt,leftmargin=*]
\item To the best of our knowledge, this is the first end-to-end DNN training framework that provides quantitative energy guarantees;
\item We propose a quantitative model to estimate the energy consumption of DNN inference on TPU-like hardware. The model can be extended to model other forms of DNN hardware;
\item We formulate a new optimization problem for energy-constrained DNN training and present a general optimization algorithm that solves the problem.
\end{itemize}


\section{Related Work}
\label{sec:related}


\paragraph{Energy-Agnostic Optimizations} Most existing DNN optimizations indirectly optimize DNN energy through reducing the model complexity. They are agonistic to the energy consumption, and therefore cannot provide any quantitative energy guarantees.

Pruning, otherwise known as sparsification, is perhaps the most widely used technique to reduce DNN model complexity by reducing computation as well as hardware memory access. It is based on the intuition that DNN model parameters that have low-magnitude have little impact on the final prediction, and thus can be zeroed-out. The classic magnitude-based pruning~\citep{han2015learning} removes weights whose magnitudes are lower than a threshold. Subsequent work guides pruning using special structures~\citep{liu2015sparse, zhou2016less, li2016pruning, wen2016learning, he2017channel}, such as removing an entire channel, to better retain accuracy after pruning.




Quantization reduces the number of bits used to encode model parameters, and thus reducing computation energy and data access energy~\citep{gong2014compressing, wu2016quantized, han2015deep}. The extreme case of quantization is using 1-bit to represent model parameters~\citep{courbariaux2015binaryconnect, rastegari2016xnor}. Such binary quantization methods are usually trained from scratch instead of quantizing a pre-trained DNN.


%


\paragraph{Energy-Aware Optimizations} Recently, energy-aware pruning (EAP)~\citep{yang2017designing} proposes to use a quantitative energy model to guide model pruning. Different from pure magnitude-based pruning methods, EAP selectively prunes the DNN layer that contributes the most to the total energy consumption. It then applies a sequence of fine-tuning techniques to retain model accuracy. The pruning step and fine-tuning step are alternated until the accuracy loss exceeds a given threshold.

Although EAP a promising first-step toward energy-aware optimizations, its key limitation is that it does not provide quantitative energy guarantees because it does not explicitly consider energy budget as a constraint. Our work integrates the energy budget as an optimization constraint in model training.



\paragraph{Latency-Guaranteed Compression} Lately, model compression research has started providing guarantees in execution (inference) latency, which theoretically could be extended to providing energy guarantees as well. However, these methods are primarily search-based through either reinforcement learning~\citep{he2018amc} or greedy-search~\citep{yang2018netadapt}. They search the sparsity setting for every single layer to meet the given budget. Thus, they may require a large number of trials to achieve a good performance, and may not ensure that the resulting model accuracy is maximized.

\section{Modeling DNN Inference Energy Consumption}
\label{sec:energy}
\vspace{-8pt}
This section introduces the model of estimating energy consumption of a single DNN inference. We consider the widely-used feed-forward DNNs. Note that our proposed methodology can be easily extended to other network architectures as well. In this section, we first provide an overview of our energy modeling methodology (\Sect{sec:energy:model}). We then present the detailed per-layer energy modeling (\Sect{sec:energy:comp} and \Sect{sec:energy:data}), which allow us to then derive the overall DNN energy consumption (\Sect{sec:energy:all}). Our energy modeling results are validated against the industry-strength DNN hardware simulator ScaleSim~\citep{scalesim}.


DNN model sparsity (via pruning) is well recognized to significantly affect the execution efficiency and thus affect the energy consumption of a DNN model~\citep{he2018amc, yang2017designing, han2015deep, liu2015sparse, zhou2016less}. We thus use pruning as the mechanism to reduce energy consumption\footnote{Quantization is another useful mechanism to reduce energy consumption. It is orthogonal to the pruning mechanism and they could be combined. This paper specifically focuses on the pruning mechanism.}. Note, however, that model sparsity is not the end goal of our paper; rather we focus on reducing the energy consumption directly. Many dedicated DNN hardware chips (a.k.a., Neural Processing Units, NPUs)~\citep{jouppi2017datacenter, eyeriss, eie, scnn} have been developed to directly benefit from model sparsity, and are already widely available in today's consumer devices such as Apple iPhoneX, Huawei Mate 10, and Microsoft HoloLens. Our paper focuses on this general class of popular, widely-used DNN chips.



\subsection{Energy Modeling Overview}
\label{sec:energy:model}
\vspace{-5pt}

A DNN typically consists of a sequence of convolution (CONV) layers and fully connected (FC) layers interleaved with a few other layer types such as Rectified Linear Unit (ReLU) and batch normalization. We focus mainly on modeling the energy consumption of the CONV and FC layers. This is because CONV and FC layers comprise more than 90\% of the total execution time during a DNN inference~\citep{eyeriss} and are the major energy consumers~\citep{han2015deep, yang2017designing}. Energy consumed by other layer types is insignificant and can be taken away from the energy budget as a constant factor.


\begin{figure}[h]
    \centering
    \includegraphics[trim=20 150 20 200, clip, width=0.8\textwidth]{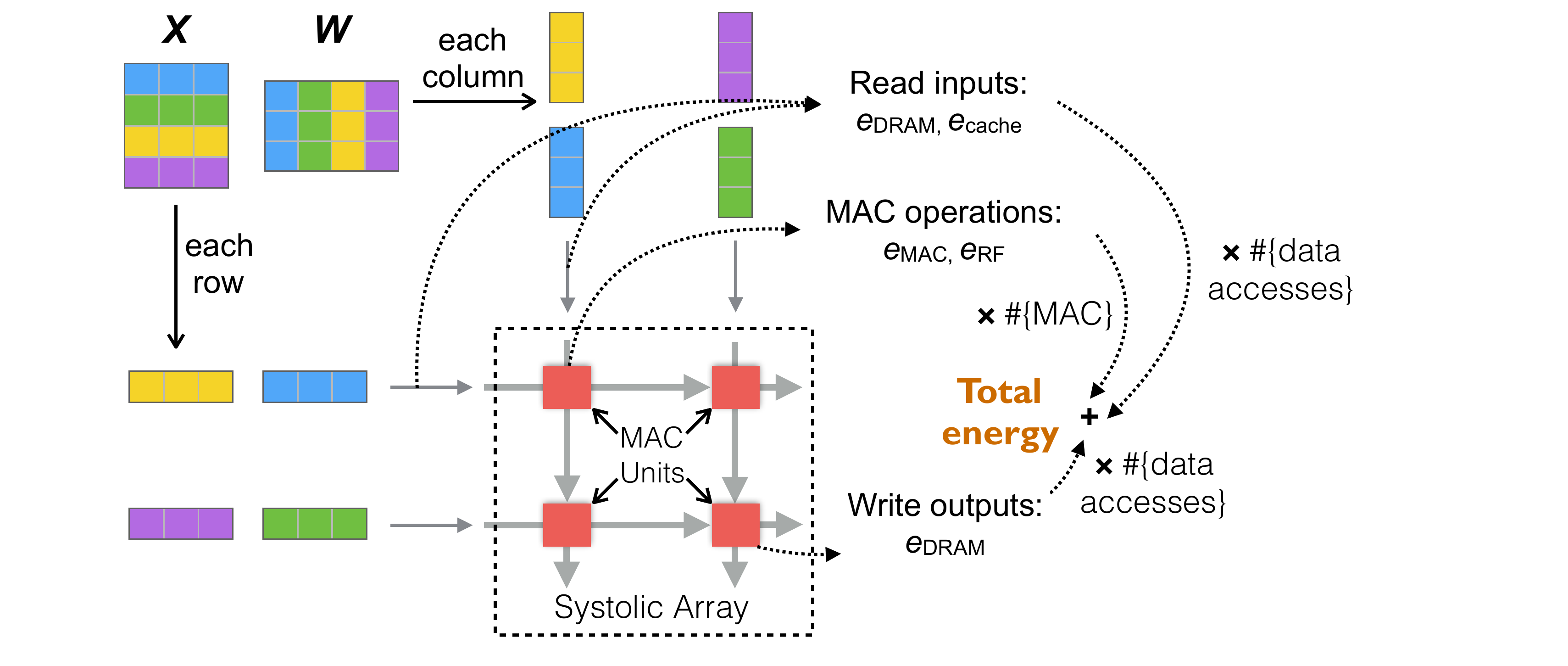}
    \caption{Illustration of the energy cost of computing matrix multiplication $XW$.}
    \label{fig:hw}
    \vspace{-5pt}
\end{figure}


A DNN inference's energy consumption is tied to the underlying hardware that performs the inference. In particular, we assume a systolic-array-based DNN hardware architecture. Systolic array~\citep{kung1982systolic} has long been know as an effective approach for matrix multiplication. Many DNN hardware architectures adopt the systolic array, most notably the Google Tensor Processing Unit (TPU)~\citep{jouppi2017datacenter}, Nvidia's Tensor Cores in their most recent Volta GPUs, and ARM's ML Processor. Targeting systolic-array-based DNN hardware ensures that our approach has a wide applicability. However, our modeling and training strategies can generally be applied to other DNN architectures.

\Fig{fig:hw} shows the overall hardware architecture. The systolic array comprises of several compute units that perform the Multiply-and-Accumulate (MAC) operation, which conducts the following computation: $a \leftarrow a + (b \times c)$, where $b$ and $c$ are the two scalar inputs and $a$ is the scalar intermediate result called ``partial sum.'' MAC operation is the building block for matrix multiplication. The MAC units are organized in a 2-D fashion. The data is fed from the edges, both horizontally and vertically, which then propagate to the MAC units within the same row and columns.

We decompose the energy cost into two parts: computation energy $E_{\text{comp}}$ and data access energy $E_{\text{data}}$. $E_{\text{comp}}$ denotes the energy consumed by computation units, and $E_{\text{data}}$ denotes the energy consumed when accessing data from the hardware memory. Since we mainly use pruning as the energy reduction technique, we now model how $E_{\text{comp}}$ and $E_{\text{data}}$ are affected by DNN sparsity.


\subsection{Energy Consumption for Computation}
\label{sec:energy:comp}
\vspace{-5pt}

CONV layers perform convolution and FC layer perform matrix-vector multiplication. Both operations can be generalized to matrix-matrix multiplication, which involves only the MAC operation~\citep{chetlur2014cudnn, jouppi2017datacenter}. \Fig{fig:hw} illustrates how a matrix-matrix multiplication is carried out on the systolic array hardware. Given $X$ and $W$, the systolic array computes $XW$ by passing each row of $X$ to each row in the systolic array and passing each column of $W$ to each column in the systolic array. If the width of the systolic array, denoted by $s_w$, is less than the width of $W$, the hardware will fold $W$ column-wise in strides of $s_w$. Similarly, if the height of $X$ is greater than the height of the systolic array ($s_h$), $X$ is folded row-size in strides of $s_h$. \Fig{fig:hw} illustrates a $2 \times 2$ systolic array multiplying two $4 \times 4$ matrices. Both matrices are folded twice in strides of 2.

Critically, if either inputs of a MAC operation is zero, we can skip the MAC operation entirely and thus save the computation energy. At a high-level, the total computation energy, $E_{\text{comp}}$, can be modeled as $e_{\text{MAC}} N_{\text{MAC}}$, where $e_{\text{MAC}}$ denotes the energy consumption of one MAC operation whereas $N_{\text{MAC}}$ denotes the total number of MAC operations that are actually performed. The challenge is to identify $N_{\text{MAC}}$ for CONV and FC layers, which we discuss below.



\paragraph{Fully connected layer} Let $X^{(v)}\in \R^{1\times c}$ be the input vector and $W^{(v)}\in \R^{c \times d}$ be the weight matrix of the FC layer $v$. The FC layer performs matrix-vector multiplication $X^{(v)} W^{(v)}$. The number of MAC operations $N_{\text{MAC}}$ is $\text{sum}(\text{supp}(X)\text{supp}(W))$, where $\text{supp}(T)$ returns a binary tensor indicating the nonzero positions of tensor $T$. So the computation energy for a fully connected layer $v$:
\begin{equation}
E_{\text{comp}}^{(v)} = e_{\text{MAC}}\text{sum}(\text{supp}(X^{(v)})\text{supp}(W^{(v)})) \leq e_{\text{MAC}}\|W^{(v)}\|_0,
\end{equation}
where the equality is reached when the input is dense.

\paragraph{Convolution layer} The CONV layer performs the convolution operation between a 4-D weight (also referred to as kernel or filter) tensor and a 3-D input tensor. Let $W^{(u)}\in \R^{d\times c \times r \times r}$ be the weight tensor, where $d$, $c$, and $r$ are tensor dimension parameters. Let $X^{(u)} \in \R^{c\times h \times w}$ be the input tensor, where $h$ and $w$ are the input height and width. The convolution operation in the CONV layer $u$ generates a 3-dimensional tensor:
\begin{equation}
(X^{(u)} * W^{(u)})_{j,y,x} = \sum_{i=1}^c \sum_{r',r''=0}^{r-1} X^{(u)}_{i,y+r',x+r''}W^{(u)}_{j, i ,r', r''},
\label{eq:convdef}
\end{equation}
where $x, y$ indicate the position of the output tensor, which has height $h'=\lfloor (h+2p-r) / s \rfloor + 1$ and width $w'=\lfloor (w+2p-r) / s \rfloor + 1$ ($p$ is the convolution padding and $s$ is the convolution stride).

Tensor convolution~\eqref{eq:convdef} can be seen as a special matrix-matrix multiplication~\citep{chellapilla2006high, chetlur2014cudnn}. Specifically, we would unfold the tensor $X^{(u)}$ to a matrix $\bar{X}^{(u)} \in \R^{h'w'\times cr^2}$, and unfold the tensor $W^{(u)}$ to a matrix $\bar{W}^{(u)}\in \R^{cr^2 \times d}$. $\bar{X}^{(u)}$ and $\bar{W}^{(u)}$ are then multiplied together in the systolic array to compute the equivalent convolution result between $X^{(u)}$ and $W^{(u)}$.

Nonzero elements in $X^{(u)}$ and $W^{(u)}$ incur actual MAC operations. Thus, $N_{\text{MAC}} = \text{sum}(\text{supp}(X^{(u)})*\text{supp}(W^{(u)})) \leq h'w'\|W^{(u)}\|_0 $ (the equality means the input is dense), resulting in the following computation energy of a CONV layer $u$:
\begin{equation}
E_{\text{comp}}^{(u)}
= e_{\text{MAC}}\text{sum}(\text{supp}(X^{(u)})*\text{supp}(W^{(u)})) \leq e_{\text{MAC}} h'w'\|W^{(u)}\|_0.
\end{equation}

\subsection{Energy Consumption for Data Access}
\label{sec:energy:data}
\vspace{-5pt}

Accessing data happens in every layer. The challenge in modeling the data access energy is that modern hardware is equipped with a multi-level memory hierarchy in order to improve speed and save energy~\citep{hennessy2011computer}. Specifically, the data is originally stored in a large memory, which is slow and energy-hungry. When the data is needed to perform certain computation, the hardware will load it from the large memory into a smaller memory that is faster and consume less energy. If the data is reused often, it will mostly live in the small memory. Thus, such a multi-level memory hierarchy saves overall energy and improves overall speed by exploiting data reuse.

Without losing generality, we model a common, three-level memory hierarchy composed of a Dynamic Random Access Memory (DRAM), a Cache, and a Register File (RF). The cache is split into two halves: one for holding $X$ (i.e., the feature map in a CONV layer and the feature vector in a FC layer) and the other for holding $W$ (i.e., the convolution kernel in a CONV layer and the weight matrix in an FC layer). This is by far the most common memory hierarchy in DNN hardware such as Google's TPU~\citep{jouppi2017datacenter, eyeriss, euphrates, eie}. Data is always loaded from DRAM into cache, and then from cache to RFs.

In many today’s DNN hardwares, the activations and weights are compressed in the dense form, and thus only non-zero values will be accessed. This is done in prior works~\citep{eyeriss,scnn}. Therefore, if the value of the data that is being loaded is zero, the hardware can skip the data access and thereby save energy. There is a negligible amount of overhead to “unpack” and “pack” compressed data, which we simply take away from the energy budget as a constant factor. This is also the same modeling assumption used by Energy-Aware Pruning~\citep{yang2017designing}.


To compute $E_{\text{data}}$, we must calculate the number of data accesses at each memory level, i.e., $N_{\text{DRAM}}, N_{\text{cache}}, N_{\text{RF}}$. Let the unit energy costs of different memory hierarchies be $e_{\text{DRAM}}$, $e_{\text{cache}}$, and $e_{\text{RF}}$, respectively, the total data access energy consumption $E_{\text{data}}$ will be $e_{\text{DRAM}}N_{\text{DRAM}} + e_{\text{cache}}N_{\text{cache}} + e_{\text{RF}}N_{\text{RF}}$. We count the number of data accesses for both the weights and input, then combine them together. The detailed derivation of data access energy is included in the Appendix.

\subsection{The Overall Energy Estimation Formulation}
\label{sec:energy:all}
\vspace{-5pt}

Let $U$ and $V$ be the sets of convolutional layers and fully connected layers in a DNN respectively. The superscript $^{(u)}$ and $^{(v)}$ indicate the energy consumption of layer $u \in U$ and $v \in V$, respectively. Then the overall energy consumption of a DNN inference can be modeled by
\begin{equation}
\label{eq:eform}
E(X, W) := \sum_{u\in U} ( E_{\text{comp}}^{(u)} + E_{\text{data}}^{(u)} ) + \sum_{v\in V} (E_{\text{comp}}^{(v)} + E_{\text{data}}^{(v)} ),
\end{equation}
where $X$ stacks input vectors/tensors at all layers and $W$ stacks weight matrices/tensors at all layers.

\section{Energy-Constrained DNN Model}
\vspace{-8pt}
Given the energy model presented in \Sect{sec:energy}, we propose a new energy-constrained DNN model that bounds the energy consumption of a DNN's inference. Different from prior work on model pruning in which energy reduction is a byproduct of model sparsity, our goal is to directly bound the energy consumption of a DNN while sparsity is just used as a means to reduce energy.

This section formulates training an energy-constrained DNN as an optimization problem. We first formulate the optimization constraint by introducing a trainable mask variable into the energy modeling to enforce layer input sparsity. We then define a new loss function by introducing the knowledge distillation regularizer that helps improve training convergence and reduce overfitting.


\paragraph{Controlling Input Sparsity Using Input Mask} The objective of training an energy-constrained DNN is to minimize the accuracy loss while ensuring that the DNN inference energy is below a given budget, $E_{\text{\rm budget}}$. Since the total energy consumption is a function of $\|X^{(u)}\|_0$ and $\|W^{(u)}\|_0$, it is natural to think that the trainable parameters are $X$ and $W$. In reality, however, $X$ depends on the input to the DNN (e.g., an input image to an object recognition DNN), and thus is unknown during training time. Therefore, in conventional DNN training frameworks $X$ is never trainable.

To include the sparsity of $X$ in our training framework, we introduce a trainable binary mask $M$ that is of the same shape of $X$, and is multiplied with $X$ before $X$ is fed into CONV or FC layers, or equivalently, at the end of the previous layer. For example, if the input to a standard CONV layer is $X^{(u)}$, the input would now be $X^{(u)} \odot M^{(u)}$, where $\odot$ denotes the element-wise multiplication. In practice, we do not really do this multiplication but only read $X^{(u)}$ on the nonzero positions of $M^{(u)}$.


With the trainable mask $M$, we can ensure that $\|X^{(u)}\odot M^{(u)}\|_0 \leq \|M^{(u)}\|_0$, and thereby bound the sparsity of the input at training time. In this way, the optimization constraint during training becomes $E(M, W) \leq E_{\text{\rm budget}}$, where $E(M, W)$ denotes the total DNN inference energy consumption, which is a function of $X$ and $W$ (as shown in~\Equ{eq:eform}), and thus a function of $M$ and $W$.


\paragraph{Knowledge Distillation as a Regularizer} Directly optimizing over the constraint would likely lead to a local optimum because the energy model is highly non-convex.
Recent works~\citep{mishra2017apprentice,tschannen2017strassennets,zhuang2018towards} notice that knowledge distillation is helpful in training compact DNN models.
To improve the training performance, we apply the knowledge distillation loss~\citep{ba2014deep} as a regularization to the conventional loss function. Intuitively, the regularization uses a pre-trained dense model to guide the training of a sparse model. Specifically, our regularized loss function is:
\begin{equation}
\bar{\mathcal{L}}_{\lambda, W_{\text{dense}}}(M, W) := (1-\lambda)\mathcal{L}(M, W) + \lambda \E_X[ \| \phi(X; W) - \phi(X; W_{\text{dense}}) \|^2 /|\phi(\cdot;W)|],
\end{equation}
where $W_{\text{dense}}$ is the original dense model, and $\mathcal{L}(M, W)$ is the original loss, e.g., cross-entropy loss for classification task. $\phi(X; W)$ is the network's output (we use the output before the last activation layer as in \citet{ba2014deep}), $|\phi(\cdot;W)|$ is the network output dimensionality and $0 \leq\lambda\leq 1$ is a hyper parameter similar to other standard regularizations.

Thus, training an energy-constrained DNN model is formulated as an optimization problem:
\begin{align}
\min_{M, W}\quad \bar{\mathcal{L}}_{\lambda, W_{\text{dense}}}(M, W) \quad \text{s.t.}\quad E(M, W) \leq E_{\text{\rm budget}}.
\label{eq:overall}
\end{align}

\begin{algorithm2e}[t]
 \SetAlgoLined
 \KwIn{Energy budget $E_{\text{\rm budget}}$, learning rates $\eta_1, \eta_2$, mask sparsity decay step $\Delta q$.}
 \KwResult{DNN weights $W^{*}$, input mask $M^{*}$.}
 Initialize $W=W_{\text{dense}}, M=\1, q=\|M\|_0-\Delta q$;\\
 \While{True}{
  	\tcp{Update DNN weights}
         \While{$W$ has not converged}{
        $W = W - \eta_1 \hat{{\nabla}}_W \mathcal{\bar{L}}(M, W)$ \tcp*{SGD step}
        $W = \proj_{\Omega(E_{\text{\rm budget}})}(W)$ \tcp*{Energy constraint projection for weights $W$}
     }
     If $\text{previous\_accuracy} > \text{current\_accuracy}$, exit loop with previous $W$ and $M$;\\
     \tcp{Update input mask}
     \While{$M$ has not converged}{
        $M=M- \eta_2 \hat{{\nabla}}_M \mathcal{\bar{L}}(M, W)$ \tcp*{SGD step}
        Clamp values of $M$ into $[0, 1]$: assign 1 (or 0) to the values if they exceeds 1 (or negative);\\
        $M=\proj_{\|M\|_0 \leq q}(M)$ \tcp*{$L_0$ constraint projection for input mask $M$}
     }
     Round values of $M$ into $\{0, 1\}$;\\
     Decay the sparsity constraint $q=q-\Delta q$;\\
 }
$W^* = W, M^* =  M$.
\caption{Energy-Constrained DNN Training.}
\label{alg:train}

\end{algorithm2e}

\vspace{-8pt}
\section{Optimization}
\label{sec:opt}
\vspace{-8pt}

This section introduces an algorithm to solve the optimization problem formulated in \eqref{eq:overall}. The overall algorithm is shown in \Alg{alg:train}. Specifically, the algorithm includes three key parts: 

\begin{itemize}[noitemsep,topsep=0pt]
\setlength\itemsep{0pt}
\item Initialization by training a dense model. That is, 
\begin{align}
W_{\text{dense}} := \argmin_{W} ~ \mathcal{L}(M, W)
\label{eq:opt_dense}
\end{align}

\item Fix $M$ and optimize $W$ via approximately solving (using $W_{\text{dense}}$ initialization):
\begin{align}
\min_{W} ~ \bar{\mathcal{L}}(M, W)\quad \text{s.t.}\quad E(M, W) \leq E_{\text{\rm budget}}
\label{eq:opt_W}
\end{align}

\item Fix $W$ and optimize $M$ by approximately solving :
\begin{align}
\min_{M}~  \bar{\mathcal{L}}(M, W)\quad \text{s.t.}\quad \|M\|_0 \leq q, M \in [\0, \1]
\label{eq:opt_M}
\end{align}
\end{itemize}

After the initialization step (Line 1 in \Alg{alg:train}), the training algorithm iteratively alternates between the second (Line 3-6 in \Alg{alg:train}) and the third step (Line 8-13 in \Alg{alg:train}) while gradually reducing the sparsity constraint $q$ (Line 14 in \Alg{alg:train}) until the training accuracy converges. Note that \Equ{eq:opt_dense} is the classic DNN training process, and solving \Equ{eq:opt_M} involves only the well-known $L_0$ norm projection $\proj_{\|M\|_0 \leq q}(Q):=\argmin_{\|M\|_0\leq q} \|M-Q\|^2$. We thus focus on how \Equ{eq:opt_W} is solved.

\paragraph{Optimizing Weight Matrix $W$} To solve \eqref{eq:opt_W}, one can use either projected gradient descent or projected stochastic gradient descent. The key difficulty in optimization lies on the projection step
\begin{align}
\proj_{\Omega(E_{\text{\rm budget}})}(Z) := \argmin_{W\in \Omega(E_{\text{\rm budget}})} \| W - Z \|^2 
\label{eq:proj}
\end{align}
where $Z$ could be $W - \eta \nabla_W \bar{\mathcal{L}}(W, M)$ or replacing $\nabla_W \bar{\mathcal{L}}(W, M)$ by a stochastic gradient $\hat{\nabla}_W \bar{\mathcal{L}}(W, M)$.  To solve the projection step, let us take a closer look at the constraint \Equ{eq:eform}. We rearrange the energy constraint $\Omega(E_{\text{\rm budget}})$ into the following form with respect to $W$:
{\small
\begin{align}
 \left\{W\bigm\vert \sum_{u \in U\cup V} \alpha^{(u)}_1\min (k, \|W^{(u)}\|_0) +  \alpha^{(u)}_2 \max (0, \|W^{(u)}\|_0 - k) +  \alpha^{(u)}_3\|W^{(u)}\|_0 + \alpha^{(u)}_4 \leq E_{\text{\rm budget}} \right\},
\label{eq:set}
\end{align}
}
where $W$ stacks all the variable $\{ W^{(u)} \}_{u\in U \cup V}$, and $\alpha^{(u)}_1, \alpha^{(u)}_2, \alpha^{(u)}_3, \alpha^{(u)}_4$ and $k$ are properly defined nonnegative constants. Note that $\alpha^{(u)}_1\leq \alpha^{(u)}_2$ and $k$ is a positive integer. Theorem~\ref{thm:reformulation} casts the energy-constrained projection problem to a $0/1$ knapsack problem. The proof is included in the Appendix.

\begin{theorem} \label{thm:reformulation}
The projection problem in \eqref{eq:proj} is equivalent to the following $0/1$ knapsack problem:
\begin{align}
\label{eq:knapsack}
\max_{\xi~\text{\rm is binary}} & \langle Z \odot Z, \xi \rangle, \quad \text{\rm s.t.}\quad \langle A, \xi \rangle \leq E_{\text{\rm budget}} - \sum_{u\in U \cup V} \alpha_4^{(u)}, 
\end{align}
where $Z$ stacks all the variables $\{ Z^{(u)} \}_{u\in U \cup V}$, $A$ and $\xi$ are of the same shape as $Z$, and the $j$-th element of $A^{(u)}$ for any $u\in U \cup V$ is defined by
\begin{equation}
\label{eq:A}
A^{(u)}_j = 
\begin{cases}
\alpha^{(u)}_1 + \alpha^{(u)}_3, \text{ if $Z^{(u)}_j$ is among the top $k$ elements of $Z^{(u)}$ in term of magnitude;}\\
\alpha^{(u)}_2 + \alpha^{(u)}_3, \text{ otherwise}.
\end{cases}
\end{equation}
The optimal solution of~\eqref{eq:proj} is $Z\odot \xi^*$, where $\xi^*$ is the optimal solution to the knapsack problem~\eqref{eq:knapsack}.
\end{theorem}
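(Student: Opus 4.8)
The plan is to reduce the projection \eqref{eq:proj} to a combinatorial selection over the support of $W$, and then to show that the piecewise-linear energy cost is encoded exactly by the per-element weights $A$. First I would note that the constraint set \eqref{eq:set} depends on $W$ only through the layerwise counts $\|W^{(u)}\|_0$, i.e. only through the support of $W$. Consequently, once a support is fixed the minimizer of $\|W-Z\|^2$ copies $Z$ on the support and is zero elsewhere; writing $\xi$ for the binary support indicator, the best such $W$ is $Z\odot\xi$ with value $\|Z\|^2 - \langle Z\odot Z,\xi\rangle$. Because the energy cost is nondecreasing in each $\|W^{(u)}\|_0$, shrinking a support only relaxes the constraint, so it suffices to optimize over binary $\xi$; moreover selecting a coordinate with $Z^{(u)}_j=0$ adds nothing to $\langle Z\odot Z,\xi\rangle$ while only consuming budget, so we may take $\xi$ supported on the nonzeros of $Z$, whence $\|W^{(u)}\|_0=\sum_j \xi^{(u)}_j=:n_u$. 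Minimizing $\|W-Z\|^2$ thus becomes maximizing $\langle Z\odot Z,\xi\rangle$ over energy-feasible binary $\xi$, which is already the knapsack objective with optimizer of the claimed form $Z\odot\xi^*$.

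The second step is to rewrite the energy as a function of the counts and match it to $A$. The left side of \eqref{eq:set} equals $\sum_u f_u(n_u)+\sum_u\alpha^{(u)}_4$ with $f_u(n)=\alpha^{(u)}_1\min(k,n)+\alpha^{(u)}_2\max(0,n-k)+\alpha^{(u)}_3 n$, which is piecewise-linear and, since $\alpha^{(u)}_1\le\alpha^{(u)}_2$, convex and increasing. The key computation is that if within layer $u$ one selects the $n_u$ largest-magnitude entries of $Z^{(u)}$, then exactly $\min(k,n_u)$ of them lie in the top-$k$ set defining $A$, so $\sum_j A^{(u)}_j\xi^{(u)}_j=(\alpha^{(u)}_1+\alpha^{(u)}_3)\min(k,n_u)+(\alpha^{(u)}_2+\alpha^{(u)}_3)\max(0,n_u-k)=f_u(n_u)$. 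For an arbitrary selection of $n_u$ coordinates, at most $\min(k,n_u)$ of them can be top-$k$, and since the top-$k$ coordinates carry the cheaper weight $\alpha^{(u)}_1+\alpha^{(u)}_3$, I would conclude $\sum_j A^{(u)}_j\xi^{(u)}_j\ge f_u(n_u)$ in general, with equality exactly for the magnitude-prefix selection.

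Combining the two steps yields the equivalence in both directions. For the easy direction, any $\xi$ feasible for the knapsack obeys $\langle A,\xi\rangle\le E_{\text{\rm budget}}-\sum_u\alpha^{(u)}_4$, and together with $\langle A,\xi\rangle\ge\sum_u f_u(n_u)$ this gives $\sum_u f_u(n_u)+\sum_u\alpha^{(u)}_4\le E_{\text{\rm budget}}$, so $\xi$ is energy-feasible; hence the knapsack feasible region sits inside the energy-feasible one and the knapsack optimum is at most the projection optimum. For the reverse direction I would use an exchange argument to show an energy-optimal $\xi$ can be taken to select a magnitude-prefix in each layer: swapping a chosen coordinate for an unchosen one of larger magnitude never lowers $\langle Z\odot Z,\xi\rangle$ and, as larger-magnitude coordinates are never more expensive, never raises the energy, so iterating produces a prefix selection of the same count $n_u$. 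For such $\xi$ the cost identity gives $\langle A,\xi\rangle=\sum_u f_u(n_u)\le E_{\text{\rm budget}}-\sum_u\alpha^{(u)}_4$, so it is knapsack-feasible and attains the projection value. The two optima therefore coincide and $Z\odot\xi^*$ solves \eqref{eq:proj}.

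I expect the main obstacle to be exactly this equivalence: the linear weights $A$ agree with the genuinely nonlinear (convex, piecewise-linear) energy cost only when the selection respects the magnitude ordering, and for generic supports the knapsack strictly overcharges, so its feasible region is a proper subset of the energy-feasible region. The crux is to verify that the ordering which maximizes the retained energy $\langle Z\odot Z,\xi\rangle$ also minimizes the surrogate cost $\langle A,\xi\rangle$ — an alignment that holds precisely because $\alpha^{(u)}_1\le\alpha^{(u)}_2$ — so that this smaller feasible region nonetheless still contains a projection optimizer. Pinning down the one-sided cost inequality for arbitrary supports and making the exchange step rigorous, rather than the routine arithmetic, is where the care lies.
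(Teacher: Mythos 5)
Your proposal is correct and follows essentially the same route as the paper's own proof: you reduce the projection to maximizing $\langle Z\odot Z,\xi\rangle$ over binary supports, and then exploit the fact that, because $\alpha^{(u)}_1\le\alpha^{(u)}_2$, magnitude-prefix selections in each layer simultaneously maximize the objective and make the linear cost $\langle A,\xi\rangle$ coincide with the true piecewise-linear energy $f_u(\|W^{(u)}\|_0)$, exactly as in the paper's two-condition argument. Your pointwise domination $\langle A^{(u)},\xi^{(u)}\rangle\ge f_u(\|\xi^{(u)}\|_0)$ for arbitrary supports is a slightly cleaner packaging of one containment (the paper instead argues only that optimal knapsack solutions are prefix selections, via the same observation that $A^{(u)}_i\ge A^{(u)}_j$ when $|Z^{(u)}_i|\le|Z^{(u)}_j|$), but the substance of the argument is identical.
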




The knapsack problem is NP hard. But it is possible to find approximate solution efficiently. There exists an approximate algorithm~\citep{chan2018approximation} that can find an $\epsilon-$accurate solution in $O(n \log(1/\epsilon) + \epsilon^{-2.4})$ computational complexity. However, due to some special structure in our problem, there exists an algorithm that can find an an $\epsilon-$accurate solution much faster.

In the Appendix, we show that an $(1+\epsilon)$-approximate solution of problem~\eqref{eq:proj} can be obtained in $\tilde{O}(n + {1 \over \epsilon^{2}})$ time complexity ($\tilde{O}$ omits logarithm), though the implementation of the algorithm is complicated. Here we propose an efficient approximate algorithm based on the ``profit density.'' The profit density of item $j$ is defined as $Z_j^2/A_j$.  We sort all items based on the ``profit density'' and iteratively select a group of largest items until the constraint boundary is reached. The detailed algorithm description is shown in the Appendix (Algorithm~\ref{alg:greedy}). This greedy approximation algorithm also admits nice property as shown in Theorem~\ref{thm:greedy}.



\begin{theorem}
\label{thm:greedy}
For the projection problem~\eqref{eq:proj}, the approximate solution $W''\in \Omega(E_{\text{\rm budget}})$ to the greedy approximation algorithm admits
\begin{equation}
\label{eq:greedy}
\| W'' - Z \|^2 \leq
\|\proj_{\Omega(E_{\text{\rm budget}})}(Z) - Z \|^2 + {\text{\rm Top}_{\|W''\|_0+1}((Z\odot Z)\oslash A)} \cdot  \min((\max(A) -  \gcd(A)), R(W''))
\end{equation}
where $\max(A)$ is the maximal element of $A$, which is a nonnegative matrix defined in \eqref{eq:A}; $\text{\rm Top}_{k}(\cdot)$ returns the $k$-th largest element of $\cdot$; $\oslash$ denotes the element-wise division. $\gcd(\cdot)$ is the largest positive rational number that divides every argument, e.g., $\gcd(0, 1/3, 2/3)=1/3$. In ~\eqref{eq:greedy}, $\gcd(A)$ denotes the greatest common divisor of all elements in $A$\footnote{Here we assume $A$ only contains rational numbers since $\gcd$ is used.}, and $R(W'')$ denotes the remaining budget
$$
R(W'') = \left(E_{\rm budget} - \sum_{u\in U \cup V} \alpha_4^{(u)}- \langle A, \text{\rm supp}(W'')\rangle \right).
$$
\end{theorem}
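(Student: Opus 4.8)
The plan is to translate the projection error into the \emph{value gap} of the knapsack problem from Theorem~\ref{thm:reformulation}, and then to bound that gap by a sharpened version of the classical greedy/fractional knapsack analysis. First I would record the key identity: for any binary $\xi$ we have $\|Z\odot\xi - Z\|^2 = \sum_{j:\xi_j=0}Z_j^2 = \|Z\|^2 - \langle Z\odot Z,\xi\rangle$. Writing $\xi^*$ for the optimal knapsack selector (so that $\proj_{\Omega(E_{\text{\rm budget}})}(Z)=Z\odot\xi^*$ by Theorem~\ref{thm:reformulation}) and $\xi''=\supp{W''}$ for the greedy selector, this identity gives $\|W''-Z\|^2 - \|\proj_{\Omega(E_{\text{\rm budget}})}(Z)-Z\|^2 = \langle Z\odot Z,\xi^*\rangle - \langle Z\odot Z,\xi''\rangle$. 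Hence it suffices to upper bound the knapsack value gap $V^*-V''$, where $V^*=\langle Z\odot Z,\xi^*\rangle$ and $V''=\langle Z\odot Z,\xi''\rangle$, by $\text{Top}_{\|W''\|_0+1}((Z\odot Z)\oslash A)\cdot\min(\max(A)-\gcd(A),\,R(W''))$.

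Next I would set up the greedy structure. Sorting all items by profit density $d_j=Z_j^2/A_j$ in decreasing order, the greedy rule selects a prefix of size $m=\|W''\|_0$ and stops at the first ``break'' item whose weight exceeds the remaining budget; by definition of $\text{Top}$ the break density is exactly $\tau:=\text{Top}_{m+1}((Z\odot Z)\oslash A)$, and $R(W'')=B-\langle A,\xi''\rangle$ with effective budget $B=E_{\text{\rm budget}}-\sum_{u\in U\cup V}\alpha_4^{(u)}$. I would then run an exchange argument against $\xi^*$: let $S^+=\{j:\xi^*_j=1,\xi''_j=0\}$ and $S^-=\{j:\xi^*_j=0,\xi''_j=1\}$. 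Every item of $S^+$ lies outside the prefix, so $Z_j^2=d_jA_j\le\tau A_j$, while every item of $S^-$ lies in the prefix, so $Z_j^2\ge\tau A_j$. Summing these two bounds yields $V^*-V''=\sum_{S^+}Z_j^2-\sum_{S^-}Z_j^2\le\tau\big(\sum_{S^+}A_j-\sum_{S^-}A_j\big)=\tau\,\Delta$, where $\Delta:=\langle A,\xi^*\rangle-\langle A,\xi''\rangle$ is the net weight the optimum adds over the greedy prefix.

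It then remains to bound $\Delta$ in two ways. Feasibility of $\xi^*$ gives $\langle A,\xi^*\rangle\le B$, hence $\Delta\le B-\langle A,\xi''\rangle=R(W'')$, which produces the first term of the minimum. For the second term I would invoke integrality: since $\gcd(A)$ divides every entry of $A$, both $\langle A,\xi^*\rangle$ and $\langle A,\xi''\rangle$ are integer multiples of $\gcd(A)$, so $\Delta$ is too; moreover the break item did not fit, so $\Delta\le R(W'')<A_{\text{break}}\le\max(A)$. A multiple of $\gcd(A)$ that is strictly below the multiple $\max(A)$ can be at most $\max(A)-\gcd(A)$, giving $\Delta\le\max(A)-\gcd(A)$. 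Combining, $\Delta\le\min(\max(A)-\gcd(A),R(W''))$, and since $\tau\ge 0$ the bound $V^*-V''\le\tau\,\Delta$ delivers \eqref{eq:greedy}.

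I expect the delicate point to be the integrality refinement of the last paragraph: the ordinary fractional-relaxation bound only yields the weaker factor $\max(A)$ (equivalently $R(W'')$), and obtaining the sharper $\max(A)-\gcd(A)$ requires the $\gcd$-divisibility of $\Delta$ together with the strict inequality $\Delta<\max(A)$ coming from the break item. I would also treat the boundary cases separately and briefly, namely when the greedy selection is forced to include items with $Z_j=0$ (these have density $0$ and may be dropped without changing $W''$ or the budget usage) and when no break item exists (the greedy then selects all profitable items that fit, so it is globally optimal and $V^*-V''=0$, making \eqref{eq:greedy} vacuous). The remaining steps are routine sign and monotonicity checks.
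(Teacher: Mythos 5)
Your proposal is correct, and it reaches the bound by a genuinely different route than the paper's proof. Both arguments begin the same way: via Theorem~\ref{thm:reformulation} and the identity $\|Z\odot\xi - Z\|^2 = \|Z\|^2 - \langle Z\odot Z,\xi\rangle$, the claim reduces to bounding the knapsack value gap $\langle Z\odot Z,\xi^*\rangle-\langle Z\odot Z,\tilde\xi\rangle$ by $\tau\cdot\min\bigl(\max(A)-\gcd(A),\,R(W'')\bigr)$ with $\tau=\text{\rm Top}_{\|W''\|_0+1}((Z\odot Z)\oslash A)$. From there the paper establishes the two bounds through two separate relaxations: for the $R(W'')$ term it passes to the fractional knapsack problem over $[\0,\1]$ and invokes the greedy (Dantzig) optimality of the fractional solution, whose value is exactly $\langle Z\odot Z,\tilde\xi\rangle+\tau\, R(\tilde\xi)$; for the $\max(A)-\gcd(A)$ term it splits each item $j$ into $A_j/\gcd(A)$ sub-items of weight $\gcd(A)$ and bounds the optimum of the resulting relaxed problem over the set $\Gamma$, in which only the break item is fractional with $\tilde\xi_t\le 1-\gcd(A)/A_t$. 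You instead prove a single exchange inequality $V^*-V''\le\tau\,\Delta_A$ with $\Delta_A:=\langle A,\xi^*\rangle-\langle A,\xi''\rangle$, exploiting that the greedy selection is a density prefix (so items in $S^-$ have density at least $\tau$ and items in $S^+$ at most $\tau$), and then bound $\Delta_A$ twice: $\Delta_A\le R(W'')$ by feasibility of $\xi^*$, and $\Delta_A\le\max(A)-\gcd(A)$ because $\Delta_A$ is a multiple of $\gcd(A)$ while $\Delta_A\le R(W'')<A_{\text{break}}\le\max(A)$, since the break item did not fit. Your route is more elementary and unified---no LP relaxation and no sub-item construction---and it isolates exactly where the integrality refinement enters, which you correctly identify as the delicate point; the paper's relaxation viewpoint buys reusable machinery (the exact fractional-optimum identity and the $\Gamma$-relaxation) at the cost of two separate constructions. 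Your boundary-case remarks (no break item exists; greedy items with $Z_j=0$, where $\supp{W''}$ may differ from $\tilde\xi$) concern degeneracies that the paper's proof glosses over as well, so flagging them informally is acceptable here.
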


The formal proof is in the Appendix. $W''$ is the optimal projection solution to \eqref{eq:proj} if either of the following conditions holds:

\begin{enumerate}[itemsep=2pt,topsep=0pt,leftmargin=*]
\item ({\bf The remaining budget is $0$.}) It means that the greedy Algorithm~\ref{alg:greedy} runs out of budget;
\item ({\bf The matrix $A$ satisfies $\max(A)=\gcd(A)$.}) It implies that all elements in $A$ have the same value. In other words, the weights for all items are identical. 
\end{enumerate}
\vspace{-3pt}

\section{Evaluation}
\vspace{-8pt}
The evaluations are performed on ImageNet~\citep{deng2009imagenet}, MNIST, and MS-Celeb-1M~\citep{guo2016ms} datasets. For the MS-Celeb-1M, we follow the baseline setting reported in the original paper~\citep{guo2016ms}, which selects $500$ people who have the most face images. We randomly sample $20\%$ images as the validation set. We use both classic DNNs, including AlexNet~\citep{krizhevsky2012imagenet} and LeNet-5~\citep{lecun1998gradient}, as well as recently proposed SqueezeNet~\citep{iandola2016squeezenet} and MobileNetV2~\citep{sandler2018mobilenetv2}. 

We compare our method mainly with five state-of-art pruning methods: magnitude-based pruning (MP)~\citep{han2015learning, han2015deep}, structured sparsity learning (SSL)~\citep{wen2016learning}, structured bayesian pruning (SBP)~\citep{neklyudov2017structured}, bayesian compression (BC)~\citep{louizos2017bayesian} and energy-aware pruning (EAP)~\citep{yang2017designing}.
Filter pruning methods~\citep{li2016pruning,he2017channel} require a sparsity ratio to be set for each layer, and these sparsity hyper-parameters will determine the energy cost of the DNN. Considering manually setting all these hyper-parameters in energy-constrained compression is not trivial, we directly compare against NetAdapt~\citep{yang2018netadapt} which automatically searches such sparsity ratios and use filter pruning to compress DNN models.
We implement an energy-constrained version of NetAdapt, which is originally designed to restrict the inference latency. Note that MobileNetv2 and SqueezeNet have special structures (e.g. residual block) that are not fully supported by NetAdapt. Thus, we show the results of NetAdapt only for AlexNet and LeNet-5.

\begin{wrapfigure}{r}{.42\columnwidth}
  \vspace{-10pt}
  \centering
  \includegraphics[width=0.42\textwidth]{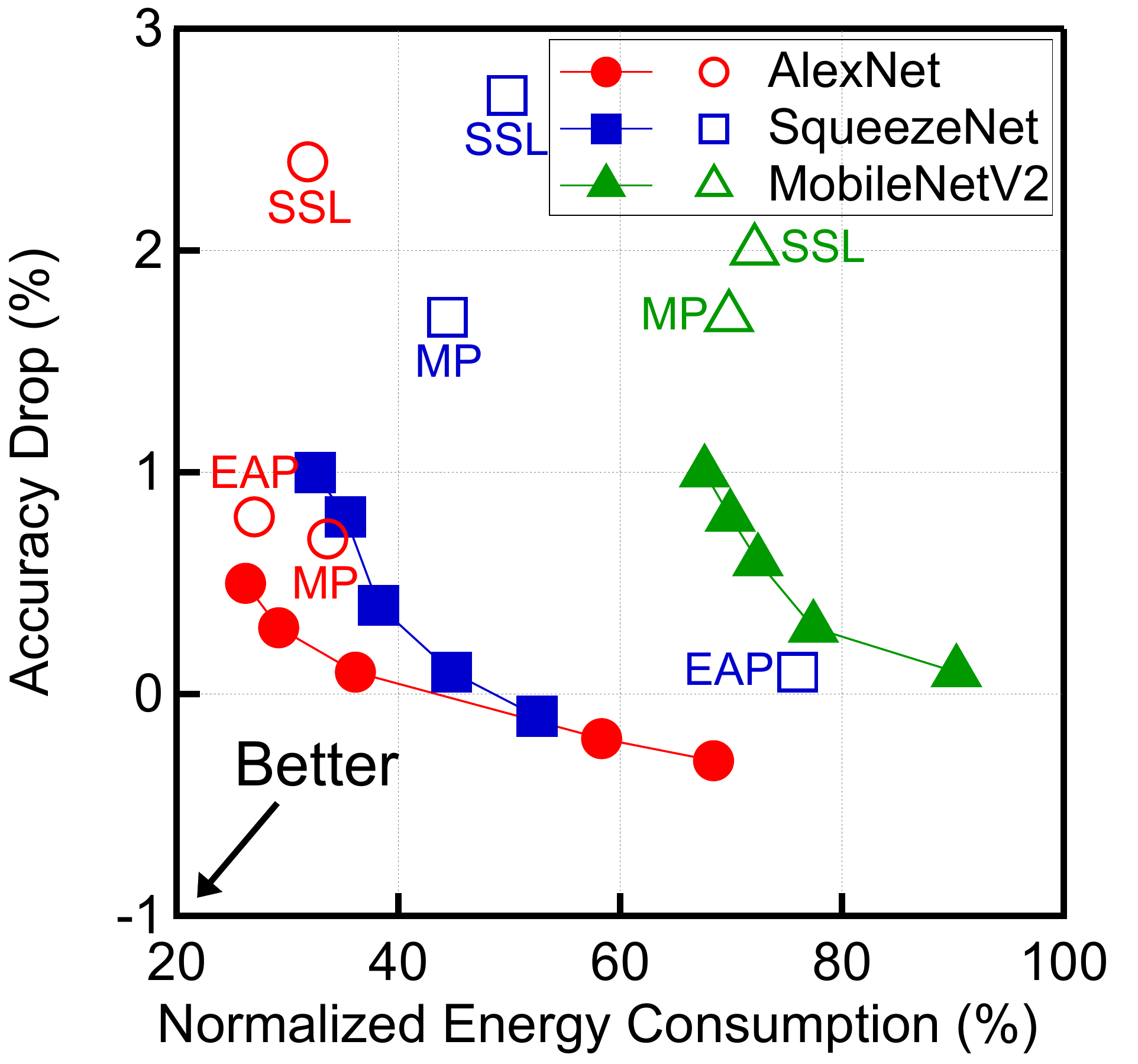}
  \caption{Accuracy drops under different energy consumptions on ImageNet.}
  \label{fig:imagenet_plot}
  \vspace{-15pt}
\end{wrapfigure}




\paragraph{Hyper-parameters} In the experiment, we observe that knowledge distillation can improve the performance of MP and SSL, so we apply knowledge distillation to all methods including the baseline for a fair comparison. The results of removing knowledge distillation on MP and SSL are included in the Appendix. We choose the distillation weight $\lambda=0.5$. EAP proposes an alternative way to solve the overfitting issue, so we directly use their results. For all the DNNs, we turn off the dropout layers since we find the knowledge distillation regularization will perform better. In all the experiments, we choose $\Delta q=0.1 |M|$ where $|M|$ is the number of all mask elements. For optimizing $W$, we use a pre-trained dense initialization and update $W$ by SGD with the learning rate $\eta_1=0.001$ and weight decay $10^{-4}$. For optimizing input mask parameters $M$, we use the Adam optimizer~\citep{kingma2014adam} with $\eta_2=0.0001$ and weight decay $10^{-5}$(MNIST)/$10^{-6}$(MS-Celeb-1M). To stabilize the training process, we exponentially decay the energy budget to the target budget, and also use this trick in MP training (i.e. decaying the sparsity budget) for fair comparisons.

\subsection{ImageNet}
\vspace{-5pt}

We set an energy budget to be less than the minimal energy consumption among the three baseline methods. We use the same performance metric (i.e. top-5 test accuracy) and hardware parameters, i.e., $e_{\text{MAC}}, e_{\text{DRAM}}, e_{\text{cache}}, e_{\text{RF}}, s_h, s_w, k_W, k_X$, as described in the EAP paper~\citep{yang2017designing}. We initialize all the DNNs by a pre-trained dense model, which is also used to set up the knowledge distillation regularization. The top-5 test accuracies on the dense models are $79.1\%$ (AlexNet), $80.5\%$ (SqueezeNet), and $90.5\%$ (MobileNetV2). We use batch size $128$ and train all the methods with $30$ epochs. For SSL and NetAdapt, we apply $20$ additional epochs to achieve comparable results. We implement the projection operation $\proj_{\Omega(E_{\text{budget}})}$ on GPU, and it takes $<0.2$s to perform it in our experiments. The detailed wall-clock result is included in the Appendix.


\begin{table}[t]
\Huge
\centering
\caption{Energy consumption and accuracy drops compared to dense models on ImageNet. We set the energy budget according to the lowest energy consumption obtained from prior art.}
\label{tab:imagenet}
\renewcommand*{\tabcolsep}{10pt}
\resizebox{\columnwidth}{!}
{
\begin{tabular}{c|c|c|c|c|c|c|c|c|c|c|c|c}

DNNs & \multicolumn{5}{c|}{AlexNet} & \multicolumn{4}{c|}{SqueezeNet} & \multicolumn{3}{c}{MobileNetV2} \\ \hline
Energy Budget & \multicolumn{5}{c|}{26\%} & \multicolumn{4}{c|}{38\%} & \multicolumn{3}{c}{68\%} \\ \hline
Methods & MP & SSL & EAP & NetAdapt & \textbf{Ours} & MP & SSL & EAP & \textbf{Ours} & MP & SSL & \textbf{Ours} \\ \hline
Accuracy Drop & 0.7\% & 2.4\% & 0.8\% & 4.4\% &\textbf{0.5\%} & 1.7\% & 2.7\% & \textbf{0.1}\% & {0.4}\% & 1.7\% & 2.0\% & \textbf{1.0\%} \\ \hline
Energy & 34\% & 32\% & 27\% & \textbf{26}\% & \textbf{26}\% & 44\% & 50\% & 76\% & \textbf{38\%} & 70\% & 72\% & \textbf{68\%} \\
\hline
Nonzero Weights Ratio & 8\% & 35\% & 9\% & 10\% & 31\% & 34\% & 61\% & 28\% & {48\%} & 52\% & 63\% & {63\%} \\

\end{tabular}
}
\vspace{-10pt}
\end{table}

\Tbl{tab:imagenet} shows the top-5 test accuracy drop and energy consumption of various methods compared to the dense model. Our training framework consistently achieves a higher accuracy with a lower energy consumption under the same energy budget. For instance on AlexNet, under a smaller energy budget ($26\% < 27\%$), our method achieves lower accuracy drop over EAP (0.5\% vs. 0.8\%). The advantage is also evident in SqueezeNet and MobileNetV2 that are already light-weight by design. EAP does not report data on MobileNetV2.
We observe that weight sparsity is \emph{not} a good proxy for energy consumption. Our method achieves lower energy consumption despite having higher density.

\Fig{fig:imagenet_plot} comprehensively compares our method with prior work. Solid markers represent DNNs trained from our framework under different energy budgets ($x$-axis). Empty markers represent DNNs produced from previous techniques. DNNs trained by our method have lower energies with higher accuracies (i.e., solid markers are closer to the bottom-left corner than empty markers). For instance on SqueezeNet, our most energy-consuming DNN still reduces energy by 23\% while improves accuracy by 0.2\% compared to EAP.

\subsection{MNIST and MS-Celeb-1M}

\vspace{-5pt}
\begin{wrapfigure}{r}{.5\columnwidth}
    \vspace{-50pt}
    \centering
    \begin{subfigure}[t]{0.24\textwidth}
        \centering
        \includegraphics[trim=120 15 45 20, clip, width=1.0\textwidth]{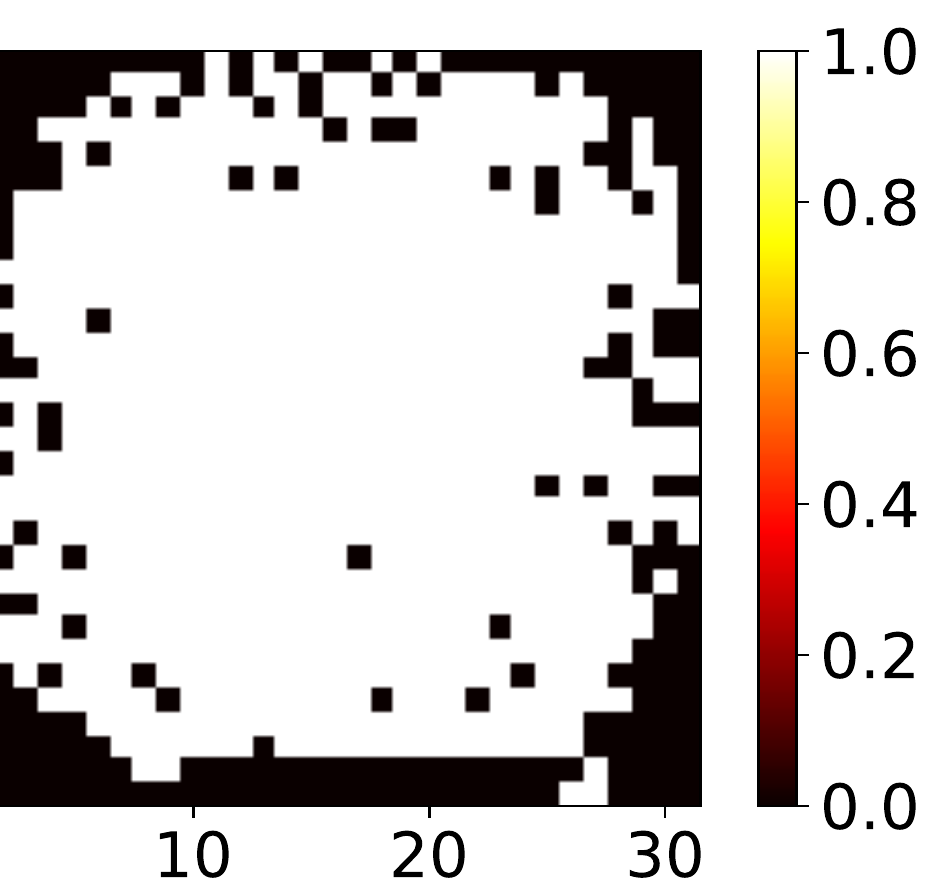}
        \caption{Mask on MNIST}
        \label{fig:mnist_vis}
    \end{subfigure}
    \hfill
    \begin{subfigure}[t]{0.24\textwidth}
        \centering
        \includegraphics[trim=120 15 45 20, clip, width=1.0\textwidth]{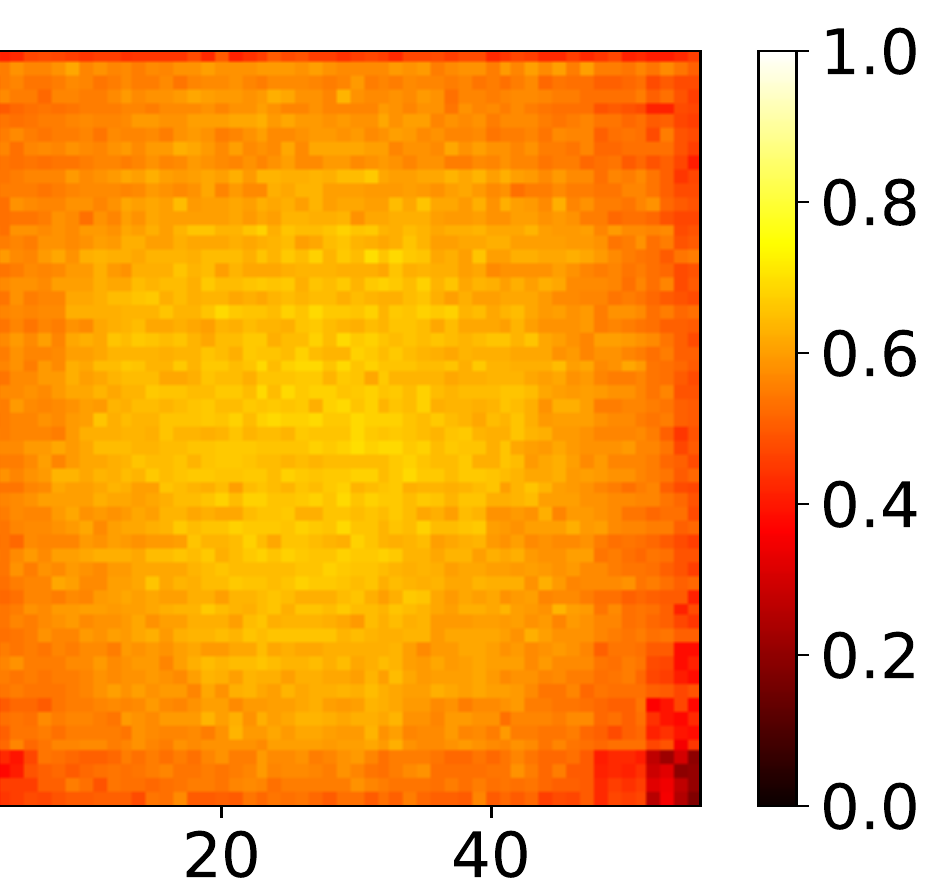}
        \caption{Mask on MS-Celeb-1M}
    \label{fig:face_vis}
    \end{subfigure}
    \caption{Input mask learned on MNIST and MS-Celeb-1M. For MNIST, the input mask for the first layer (with one channel) is shown. For MS-Celeb-1M, we show the input mask averaged across the $96$ channels in the 7$^{th}$ layer. 0 indicates a pixel is masked off, and 1 indicates otherwise.}
\label{fig:mask} 
\vspace{-10pt}
\end{wrapfigure}

MNIST and MS-Celeb-1M~\citep{guo2016ms} represent datasets where inputs have regular patterns that are amenable to input masking. For instance, MS-Celeb-1M is a face image dataset and we use its aligned face images where most of the facial features are located in the center of an image. In such scenarios, training input masks lets us control the sparsity of the layer inputs and thus further reduce energy than merely pruning model parameters as in conventional methods. We do not claim that applying input mask is a general technique; rather, we demonstrate its effectiveness when applicable.

We compare our method with MP and SSL using LeNet-5 and MobileNetV2 for these two datasets, respectively. The pre-trained dense LeNet-5 has $99.3\%$ top-1 test accuracy on MNIST, and the dense MobileNetV2 has $65.6\%$ top-5 test accuracy on MS-Celeb-1M. EAP does not report data on these two datasets. Similar to the evaluation on ImageNet, we set the energy budget to be lower than the energy consumptions of MP and SSL. We use batch size $32$ on MNIST and $128$ on MS-Celeb-1M, and number of epochs is set the same as the ImageNet experiments. \Tbl{tab:mnist_celeb} compares the energy consumption and accuracy drop. Our method consistently achieves higher accuracy with lower energy under the same or even smaller energy budget. We visualize the sparsity of the learned input masks in~\Fig{fig:mask}.

\vspace{-5pt}
\begin{table}[h]
\Huge
\centering
\caption{Energy consumptions and accuracy drops on MNIST and MS-Celeb-1M.}
\label{tab:mnist_celeb}
\renewcommand*{\tabcolsep}{15pt}
\resizebox{\columnwidth}{!}
{
\begin{tabular}{c|c|c|c|c|c|c|c|c|c}
DNNs@Dataset & \multicolumn{6}{c|}{LeNet-5@MNIST} & \multicolumn{3}{c}{MobileNetV2@MS-Celeb-1M} \\ \hline
Energy Budget & \multicolumn{6}{c|}{17\%} & \multicolumn{3}{c}{60\%} \\ \hline
Methods & MP & SSL & NetAdapt & SBP & BC & \textbf{Ours} & MP & SSL & \textbf{Ours} \\ \hline
Accuracy Drop & 1.5\% & 1.5\% & 0.6\% & 1.5\% & 2.2\% & \textbf{0.5}\% & 1.1\%  & 0.7\% & \textbf{0.2}\% \\ \hline
Energy & 18\% & 20\% & 18\% & 22\% & 26\% & \textbf{17\%} & 66\% & 72\% & \textbf{60\%} \\ 
\end{tabular}
}
\vspace{-10pt}
\end{table}

\section{Conclusion}
\vspace{-8pt}
This paper demonstrates that it is possible to train DNNs with quantitative energy guarantees in an end-to-end fashion. The enabler is an energy model that relates the DNN inference energy to the DNN parameters. Leveraging the energy model, we augment the conventional DNN training with an energy-constrained optimization process, which minimizes the accuracy loss under the constraint of a given energy budget. Using an efficient algorithm, our training framework generates DNNs with higher accuracies under the same or lower energy budgets compared to prior art.


{
\bibliographystyle{iclr2019_conference}
\bibliography{reference}
}

\newpage
\section*{Appendices}

\subsection*{Detail of Energy Consumption for Data Access}

\subsubsection*{Fully Connected Layer}
To multiply $X^{(v)}\in \R^{c}$ and $W^{(v)}\in \R^{c\times d}$, each nonzero element of $W^{(v)}$ is used once but loaded three times, once each from DRAM, cache and RF, respectively. Thus, the number of DRAM, cache, and RF accesses for weight matrix $W^{(v)}$ is:
\begin{equation}
\label{equ:fc_w}
N_{\text{DRAM}}^{\text{weights}} =N_{\text{cache}}^{\text{weights}} =N_{\text{RF}}^{\text{weights}} = \|W^{(v)}\|_0.
\end{equation}
Input $X^{(v)}$ is fed into the systolic array $\lceil d/s_w \rceil$ times, where $s_w$ denotes the the systolic array width. Thus, the number of cache accesses for $X^{(v)}$ is:
\begin{equation}
\label{equ:fc_c}
N_{\text{cache}}^{\text{input}} = \lceil d/s_w \rceil \|X^{(v)}\|_0.
\end{equation}
Let $k_X$ be the cache size for input $X^{(v)}$.
If $k_X$ is less than $\|X^{(v)}\|_0$, there are $\|X^{(v)}\|_0 - k_X$ elements that must be reloaded from DRAM every time. The rest $k_X$ elements need to load from only DRAM once as they will always reside in low-level memories. Thus, there are $\lceil d/s_w \rceil (\|X^{(v)}\|_0 - k_X) + k_X$ DRAM accesses for $X^{(v)}$. In addition, the output vector of the FC layer (result of $X^{(v)} W^{(v)}$) needs to be written back to DRAM, which further incurs $d$ DRAM accesses. Thus, The total number of DRAM accesses to retrieve $X^{(v)}$ is:
\begin{equation}
\label{equ:fc_d}
N_{\text{DRAM}}^{\text{input}} = \lceil d/s_w \rceil \max(0, \|X^{(v)}\|_0 - k_X) + \min(k_X, \|X^{(v)}\|_0) + d.
\end{equation}
Each input element is loaded from RF once for each MAC operation, and there are two RF accesses incurred by accumulation for each MAC operation (one read and one write). Thus, the total number of RF accesses related to $X^{(v)}$ is:
\begin{equation}
\label{equ:fc_rf}
N_{\text{RF}}^{\text{input}} = d\|X^{(v)}\|_0 + 2 \|W^{(v)}\|_0.
\end{equation}
In summary, the data access energy of a fully connected layer $v$ is expressed as follows, in which each component follows the derivations in \Equ{equ:fc_w} through \Equ{equ:fc_rf}:
\begin{equation}
\label{eq:edata_fc}
E_{\text{data}}^{(v)} = e_{\text{DRAM}}(N_{\text{DRAM}}^{\text{input}} + N_{\text{DRAM}}^{\text{weights}}) + e_{\text{cache}}(N_{\text{cache}}^{\text{input}} + N_{\text{cache}}^{\text{weights}}) + e_{\text{RF}}(N_{\text{RF}}^{\text{input}} + N_{\text{RF}}^{\text{weights}}).
\end{equation}

\subsubsection*{Convolution Layer}
Similar to a FC layer, the data access energy of a CONV layer $u$ is modeled as:
\begin{equation}
\label{eq:edata_conv}
E_{\text{data}}^{(u)} = e_{\text{DRAM}}(N_{\text{DRAM}}^{\text{input}} + N_{\text{DRAM}}^{\text{weights}}) + e_{\text{cache}}(N_{\text{cache}}^{\text{input}} + N_{\text{cache}}^{\text{weights}}) + e_{\text{RF}}(N_{\text{RF}}^{\text{input}} + N_{\text{RF}}^{\text{weights}}).
\end{equation}
The notations are the same as in FC layer. We now show how the different components are modeled.

To convolve $W^{(u)}\in \R^{d\times c \times r \times r}$ with $X^{(u)} \in \R^{c\times h \times w}$, each nonzero element in the weight tensor $W^{(u)}$ is fed into the systolic array $\lceil h'w'/s_h \rceil$ times, where $s_h$ denotes the height of the systolic array and $h'$ and $w'$ are dimension parameters of $X^{(u)}$. Thus,
\begin{equation}
\label{equ:conv_c}
N_{\text{cache}}^{\text{weights}} = \lceil h'w'/s_h \rceil \|W^{(u)}\|_0.
\end{equation}
Similar to the FC layer, the number of RF accesses for $W^{(u)}$ during all the MAC operations is:
\begin{equation}
\label{equ:conv_rf}
N_{\text{RF}}^{\text{weights}} = h'w'\|W^{(u)}\|_0.
\end{equation}
Let $k_W$ be the cache size for the weight matrix $W^{(u)}$. If $\|W^{(u)}\|_0 > k_W$, there are $k_W$ nonzero elements of $W^{(u)}$ that would be accessed from DRAM only once as they would reside in the cache, and the rest $\|W^{(u)}\|_0 - k_W$ elements would be accessed from DRAM by $\lceil h'w'/s_h \rceil$ times. Thus,
\begin{equation}
\label{equ:conv_w}
N_{\text{DRAM}}^{\text{weights}} = \lceil h'w'/s_h \rceil \max(0, \|W^{(u)}\|_0 - k_W) + \min(k_W, \|W^{(u)}\|_0).
\end{equation}

Let $k_X$ be the cache size for input $X^{(u)}$. If every nonzero element in $X^{(u)}$ is loaded from DRAM to cache only once, $N_{\text{DRAM}}^{\text{input}}$ would simply be $\|X^{(u)}\|_0$. In practice, however, the cache size $k_X$ is much smaller than $\|X^{(u)}\|_0$. Therefore, some portion of $X^{(u)}$ would need to be re-loaded. To calculate the amount of re-loaded DRAM access, we observe that in real hardware $X^{(u)}$ is loaded from DRAM to the cache at a row-granularity.

When the input $X^{(u)}$ is dense, there are at least $cw$ elements loaded at once. In this way, the cache would first load $\lfloor k_X/(cw) \rfloor$ rows from DRAM, and after the convolutions related to these rows have finished, the cache would load the next $\lfloor k_X/(cw) \rfloor$ rows in $X^{(u)}$ for further processing. The rows loaded in the above two rounds have overlaps due to the natural of the convolution operation. The number of overlaps $R_{\text{overlap}}$ is $\lceil h / (\lfloor {k_X / cw} \rfloor - r + s) \rceil - 1$, and each overlap has $cw(r-s)$ elements. Thus, $R_{\text{overlap}} \times cw(r-s)$ elements would need to be reloaded from DRAM. Finally, storing the outputs of the convolution incurs an additional $dh'w'$ DRAM writes. Summing the different parts together, the upper bound ($X^{(u)}$ is dense) number of DRAM accesses for $X^{(u)}$ is:
\begin{equation}
\label{equ:conv_df}
N_{\text{DRAM}}^{\text{input}} = \|X^{(u)}\|_0 + (\lceil h / (\lfloor {k_X / cw} \rfloor - r + s) \rceil - 1)cw(r-s) + dh'w'.
\end{equation}

When the input $X^{(u)}$ is not dense, we can still count the exact number of elements in the overlaps $N_{\text{overlap}}$ of the consecutive loading rounds, so we have:
\begin{equation}
\label{equ:conv_df2}
N_{\text{DRAM}}^{\text{input}} = \|X^{(u)}\|_0 + N_{\text{overlap}} + dh'w'.
\end{equation}

Every nonzero element in the unfolded input $\bar{X}^{(u)}$ would be fed into the systolic array $\lceil d/s_w \rceil$ times (for grouped convolution, this number is divided by the number of groups). Each MAC operation introduces 2 RF accesses. Thus,
\begin{equation}
\label{equ:conv_f}
N_{\text{cache}}^{\text{input}} = \lceil d/s_w \rceil\|\bar{X}\|_0, \ N_{\text{RF}}^{\text{input}} =  d \|\bar{X}^{(u)}\|_0 + 2h'w'\|W^{(u)}\|_0.
\end{equation}

\subsection*{Proof to Theorem~\ref{thm:reformulation}}
\begin{proof}
First, it is easy to see that \eqref{eq:proj} is equivalent to the following problem
\begin{equation}
\label{eq:proj_reform}
\max_{\xi~\text{\rm is binary}}  \langle Z \odot Z, \xi \rangle, \quad \text{s.t.}\quad \xi \in \Omega(E_{\text{\rm budget}}).
\end{equation}
Note that if the optimal solution to problem~\eqref{eq:proj_reform} is $\bar{\xi}$, the solution to problem~\eqref{eq:proj} can be obtained by $Z \odot \bar{\xi}$; given the solution to \eqref{eq:proj}, the solution to~\eqref{eq:proj_reform} can be obtained similarly.


Therefore, we only need to prove that \eqref{eq:proj_reform} is equivalent to \eqref{eq:knapsack}. Meeting the following two conditions guarantees that \eqref{eq:proj_reform} and \eqref{eq:knapsack} are equivalent since they have identical objective functions:
\begin{enumerate}
\item Any optimal solution of problem~\eqref{eq:proj_reform} is in the constraint set of problem~\eqref{eq:knapsack};
\item Any optimal solution of problem~\eqref{eq:knapsack} is in the constraint set of problem~\eqref{eq:proj_reform}.
\end{enumerate}

Let us prove the first condition. Let $\hat{\xi}$ be the optimal solution to \eqref{eq:proj_reform}. Then for any $u \in U\cup V$, the elements of $Z^{(u)}$ selected by $\hat{\xi}^{(u)}$ are the largest (in terms of magnitude) $\|\hat{\xi}^{(u)}\|_0$ elements of $Z^{(u)}$; otherwise there would exist at least one element that can be replaced by another element with a larger magnitude, which would increase the objective value in~\eqref{eq:proj_reform}. Since $\hat{\xi} \in \Omega(E_{\text{\rm budget}})$, according to the definition of $A$ in \eqref{eq:A}, $\hat{\xi}$ satisfies the constraint of \eqref{eq:knapsack}.

Let us now prove the second condition. The definition of $A$ in \eqref{eq:A} show that there could at most be two different $A^{(u)}$ values for each element $u$, and the largest $k$ elements in $Z^{(u)}$ always have the smaller value, i.e., $\alpha^{(u)}_1 + \alpha^{(u)}_3$. Let $\bar{\xi}$ be the optimal solution to the knapsack problem~\eqref{eq:knapsack}. For any $u \in U\cup V$, the elements selected by $\bar{\xi}^{(u)}$ are also the largest elements in $Z^{(u)}$ in terms of magnitude; otherwise there would exist an element $Z^{(u)}_j$ that has a larger magnitude but corresponds to a smaller $A^{(u)}_j$ (\eqref{eq:A} shows that $A^{(u)}_i \geq A^{(u)}_j$ when $|Z^{(u)}_i| \leq |Z^{(u)}_j |$). This would contradict the fact that $\bar{\xi}$ is optimal. In addition, $\bar{\xi}$ meets the constraint in problem~\eqref{eq:knapsack}. Therefore, $\bar{\xi} \in \Omega(E_{\text{\rm budget}})$.

It completes the proof.
\end{proof}

\subsection*{An $(1 + \epsilon)$-approximate solution for problem~\eqref{eq:proj}}

\begin{theorem}
\label{thm:approx}
For the projection problem~\eqref{eq:proj}, there exists an efficient approximation algorithm that has a computational complexity of $O\left(\left(n+\frac{(|U|+|V|)^3}{ \epsilon^2}\right)\log{n \max(A)\over \min(A_+)}\right)$ and generates a solution $W'\in \Omega(E_{\text{\rm budget}})$ that admits
\begin{equation}
\| W' - Z \|^2 \leq \left\| \proj_{\Omega \left({E_{\text{\rm budget}} \over 1 + O(\epsilon)}\right)}(Z) - Z \right\|^2,
\end{equation}
where $\min(A_+)$ is the minimum of the positive elements in $A$.
\end{theorem}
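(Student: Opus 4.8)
The plan is to prove the theorem through the knapsack reformulation of Theorem~\ref{thm:reformulation} together with the identity $\|W - Z\|^2 = \|Z\|^2 - \langle Z\odot Z, \xi\rangle$ (valid whenever $W = Z\odot\xi$), which converts the asserted projection bound into an equivalent statement about the knapsack~\eqref{eq:knapsack}: the algorithm must return a feasible $\xi'$ (so that $\langle A,\xi'\rangle \le E_{\text{\rm budget}} - \sum_u \alpha_4^{(u)}$, hence $W' = Z\odot\xi' \in \Omega(E_{\text{\rm budget}})$) whose value satisfies $\langle Z\odot Z, \xi'\rangle \ge \mathrm{OPT}\big(B/(1+O(\epsilon))\big)$, where $B := E_{\text{\rm budget}} - \sum_u \alpha_4^{(u)}$ and $\mathrm{OPT}(\cdot)$ denotes the optimal knapsack value at the indicated budget. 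Writing $m := |U|+|V|$, the whole argument rests on exposing a separable concave structure hidden in \eqref{eq:knapsack}--\eqref{eq:A}.

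First I would reduce the $0/1$ knapsack to a separable problem over the $m$ layers. Reusing the exchange argument from the proof of Theorem~\ref{thm:reformulation}, any candidate may be taken to select, within each layer $u$, the $c^{(u)}$ largest-magnitude entries of $Z^{(u)}$, so a solution is encoded by the counts $(c^{(u)})_u$. Let $g_u(c)$ and $f_u(c)$ be the resulting layer weight and the layer profit (the prefix sum of the top $c$ squared magnitudes). The key structural lemma is that each $f_u$ is a concave function of the allocated weight $g_u$: the marginal ratios equal $(Z^{(u)}_{(c)})^2/(\alpha_1^{(u)}+\alpha_3^{(u)})$ for $c\le k$ and $(Z^{(u)}_{(c)})^2/(\alpha_2^{(u)}+\alpha_3^{(u)})$ for $c>k$, which is non-increasing because the squared magnitudes decrease and, using $\alpha_1^{(u)}\le\alpha_2^{(u)}$ from \eqref{eq:set}, the per-item weight only jumps upward. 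Hence the problem is a separable concave maximization under one linear budget, and a simple merge of the sorted marginal ratios computes the exact fractional optimum in $\tilde O(n)$ time, pinning $\mathrm{OPT}(B)$ between the greedy integral value and that value plus a single boundary item.

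Next I would obtain the $(1+\epsilon)$ integral guarantee by multiplicative weight rounding combined with an exact dynamic program that exploits concavity. Rounding every weight value up onto a geometric grid of ratio $1+\Theta(\epsilon)$ inflates each weight by a factor in $[1,\,1+O(\epsilon)]$; an outer scaling / binary-search loop over the weight range $[\min(A_+),\,n\max(A)]$ then contributes the factor $\log\!\big(n\max(A)/\min(A_+)\big)$. The reduction lemma is standard: the exact optimum of the rounded instance at budget $B$ is feasible for the true weights at budget $B$, and it dominates the true optimum at budget $B/(1+O(\epsilon))$, since any solution feasible at the smaller budget remains feasible after rounding. Solving the rounded instance is then a DP over the $m$ layers on a discretized budget axis; because each $f_u$ is concave and $(\max,+)$-convolution preserves concavity, every transition convolves two concave sequences and runs in time linear in the number of budget levels. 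Choosing the level count as dictated by the error analysis yields the DP cost $O(m^3/\epsilon^2)$, and adding the $O(n)$ preprocessing (per-layer selection and prefix sums) together with the scaling factor gives the stated complexity. Translating back through $\|W-Z\|^2 = \|Z\|^2 - \langle Z\odot Z,\xi\rangle$ produces the claimed projection bound.

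The hard part will be the joint error bookkeeping: folding the multiplicative weight rounding and the additive budget discretization into a single clean $1/(1+O(\epsilon))$ budget relaxation, while simultaneously keeping the number of DP levels small enough to hit $O(m^3/\epsilon^2)$. The tension is the usual one---finer grids improve the approximation but enlarge the state space---and resolving it relies on the concave separable structure, so that the discretization error telescopes over only the $m$ layers rather than the $n$ individual items, together with the geometric weight normalization that lets one grid serve layers whose weights differ by large factors. The remaining technical checks are that the concavity-preserving convolution indeed runs in amortized linear time per layer (via monotone pointers over the sorted slopes) and that integrality is lost only at the single budget-boundary item.
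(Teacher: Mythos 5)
Your proof skeleton coincides with the paper's: both arguments pass through Theorem~\ref{thm:reformulation} and the identity $\|Z\odot\xi - Z\|^2 = \|Z\|^2 - \langle Z\odot Z, \xi\rangle$, reduce the claim to ``return a $\xi'$ feasible at budget $B := E_{\text{\rm budget}} - \sum_{u}\alpha_4^{(u)}$ whose value dominates the optimum at budget $B/(1+O(\epsilon))$,'' exploit that the instance has only $m = O(|U|+|V|)$ distinct weights, and merge per-group profit--weight step functions by a $(\min,+)$/$(\max,+)$-convolution on a grid of $O(\mathrm{poly}(m)/\epsilon)$ points, with a geometric range of scales supplying the $\log\frac{n\max(A)}{\min(A_+)}$ factor. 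Where you genuinely diverge is in the two workhorse ingredients. First, the grouping: the paper groups items by distinct weight value, so each group curve $h_{I_i}$ is a trivially structured $w$-uniform staircase (Lemma~\ref{lem:f1}), and concavity is never used --- pairwise merges are done by $O(l^2)$ brute force inside a binary tree (Lemmas~\ref{lem:1_pre1} and~\ref{lem:1_pre2}). You group by layer and prove a concavity lemma: within a layer the top-magnitude prefix curve has non-increasing profit density because $\alpha_1^{(u)}\le\alpha_2^{(u)}$ makes the per-item weight jump only upward at the $k$-th item. That lemma is correct (the exchange argument from the proof of Theorem~\ref{thm:reformulation} justifies restricting to magnitude prefixes) and is strictly more structure than the paper needs; with linear-time concave $(\max,+)$-convolution and $O(m/\epsilon)$ budget levels it would actually give $O(m^2/\epsilon)$ merge cost, so note the internal tension in your write-up --- ``transitions linear in the number of levels'' does not produce $O(m^3/\epsilon^2)$; you only land on the stated bound if you fall back to quadratic convolutions, which is exactly what the paper does. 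Second, the approximation mechanism: the paper ceils the capped curves onto an additive $(\epsilon b)$-grid at every scale $b\in\mathcal{B}$ and stitches the inverses into a multiplicative approximation of the entire function $h_I$ (Lemma~\ref{lem:invert}, Theorem~\ref{thm:cluster}), whereas you round weights up onto a multiplicative $(1+\Theta(\epsilon))$ grid and discretize the budget axis once. Your route buys a simpler single-budget argument --- for the one fixed budget $B$ a single scale with cell size $\Theta(\epsilon B/m)$ already suffices, so your log factor is doing less work than the paper's, which pays it to obtain a curve oracle valid at \emph{all} budgets; the paper's route buys that reusable oracle.

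Three loose ends you should tighten. (a) The ``outer scaling / binary-search loop over the weight range'' is left unjustified; either argue why multiple scales are needed for your variant or drop it and observe the single-scale bound. (b) Items with $A_j = 0$ break your profit-density and rounding arguments (infinite density, no positive weight to round); the paper handles them by selecting them unconditionally, which is precisely why $\min(A_+)$ rather than $\min(A)$ appears in the complexity. (c) The error bookkeeping you defer is the crux and must be made explicit: within-layer cumulative weights are kept exact, the DP snap loses at most one grid cell per layer (total $m$ cells $= \epsilon B$), and rounding inflates the comparison solution's weight by a factor $1+O(\epsilon)$; together these show the true optimum at budget $B/(1+O(\epsilon))$ remains feasible in the rounded, discretized instance at budget $B$, while any rounded-feasible solution is truly feasible since weights were rounded up. None of these gaps is fatal, but as written the proposal asserts rather than proves the step on which the theorem's guarantee rests.
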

$|U|$ and $|V|$ denote the number of CONV and FC layers, respectively. They are very small numbers that can be treated as constants here. 
Thus, the computational complexity for our problem is reduced to $\tilde{O}(n + {1 \over \epsilon^{2}})$, where $\tilde{O}$ omits the logarithm term. In the following, we will prove this theorem by construction.

\subsubsection*{Problem Formulation}
\begin{definition}
{\bf Inverted knapsack problem.}
Given $n$ objects $I := \{(v_i, w_i)\}_{i=1}^n$ each with weight $w_i> 0$, and value $v_i\geq 0$, define $h_I(x)$ to be the smallest weight budget to have the total value $x$:
\begin{gather}
\label{eq:knapsack_inv}
h_I(x) := \min_{\xi \in \{0,1\}^n} \sum_{i=1}^n w_i\xi_i \\
\text{\rm s. t. } \sum_{i=1}^n v_i\xi_i \geq x \notag
\end{gather}
\end{definition}

We are more interested in the case that the weights of $n$ objects are in $m$ clusters, i.e. there are only $m$ distinct weights,
$$ |\{w_i\}_{i=1}^n | = m.$$

In our case, $m$ is proportional to the number of layers in DNN, and $n$ is the number of all the learnable weights in $W$, so $m\ll n$.

\begin{definition}
{\bf Inverse of step function.} The inverse of the step function $f$ is defined as the maximal $x$ having the function value $y$:
\begin{equation}
f^{-1}(y) := 
\max_{f(x)\leq y} x
\label{eq:inv}
\end{equation}
\end{definition}

\paragraph{Observation} The inverse of the step function $h_I^{-1}(y)$ is just the maximal value we can get given the weight budget, i.e. the original knapsack problem:
\begin{equation}
\label{eq:problem}
h_I^{-1}(y) = \max_{\xi \in \{0,1\}^n} \sum_{i=1}^n v_i\xi_i, \quad \text{s. t. } \sum_{i=1}^n w_i\xi_i \leq y.
\end{equation}

\paragraph{Observation} Given a step function with $l$ breakpoints, its inverse can be generated with $O(l)$ time complexity, and vice versa.

Thus, given the step function of $h_I$ in~\eqref{eq:knapsack_inv} which has $l$ breakpoints, we can get $h_I^{-1}$ (i.e. the original knapsack problem) within $O(l)$ time complexity.

\begin{definition}
{\bf $w$-uniform.} Step function $f$ is $w$-uniform if the ranges of $f$ is from $-\infty, 0, w, 2w ,..., lw$.
\end{definition}

\paragraph{Observation}
If all the objects in $I$ have the same weight $w$, i.e. $m=1$, then the function $h_I(x)$ is nondecreasing and $w$-uniform. Moreover, its breakpoints are:
$$(0,0), (v_1, w), (v_1 + v_2, 2w), ..., \left(\sum_{i=1}^n v_i, nw \right),$$
if the objects' indices follows the decreasing order in terms of the values, i.e. $v_1 \geq v_2 \geq ... \geq v_n$. Thus we can get all possible function values of $h_I(x)$:
$$h_I(x) = kw,\quad \forall x \in \left(\sum_{i=1}^{k-1} v_i,  \sum_{i=1}^{k} v_i \right].$$

\begin{definition}
{\bf (min, +)-convolution.} For functions $f, g$, the (min, +)-convolution is:
$$(f \oplus g)(x) = \min_{x'} (f(x') + g(x-x')).$$
\end{definition}

\paragraph{Observation}
If object sets $I_1 \cap I_2 = \emptyset$, then
$$f_{I_1\cup I_2} = f_{I_1} \oplus f_{I_2}.$$

\paragraph{Observation}
The inverse of (min, +)-convolution between $w$-uniform function $f$ and $w$-uniform function $g$ is the (max, +)-convolution between $f^{-1}$ and $g^{-1}$:
\begin{equation}
(f\oplus g)^{-1} (y) = \max_{y'\in \{0, 1w,..., lw\}}( f^{-1} (y') + g^{-1}(y-y')). \label{eq:maxconv}
\end{equation}

\begin{lemma}
\label{lem:min_b}
For any $f$ and $g$ nonnegative step functions, given an arbitrary number $b$, we always have 
\begin{equation}
\label{eq:minfg}
\min \{f\oplus g, b\} = \min\{ \min \{f, b\} \oplus \min \{g, b\}, b\}
\end{equation}
\end{lemma}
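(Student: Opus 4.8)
The plan is to establish the identity pointwise in $x$ by proving the two inequalities $\min\{(f\oplus g)(x), b\} \le \min\{(\tilde f\oplus\tilde g)(x), b\}$ and its reverse, where I abbreviate $\tilde f := \min\{f, b\}$ and $\tilde g := \min\{g, b\}$. Since both sides are obtained by truncating a $(\min,+)$-convolution at the same level $b$, it suffices to compare the convolutions at each point, and the nonnegativity of $f$ and $g$ is precisely what makes the truncation interact cleanly with the convolution.

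First I would dispatch the easy direction. Because $\tilde f \le f$ and $\tilde g \le g$ everywhere, every summand $\tilde f(x') + \tilde g(x-x')$ is at most $f(x') + g(x-x')$, so taking the minimum over $x'$ gives $(\tilde f\oplus\tilde g)(x) \le (f\oplus g)(x)$; applying the monotone operation $\min\{\,\cdot\,, b\}$ to both sides yields that the right-hand side is $\le$ the left-hand side.

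For the reverse inequality, fix $x$ and let $x^*$ attain the minimum defining $(\tilde f\oplus\tilde g)(x)$ (the minimum is attained since the step functions have finitely many breakpoints). I would split on whether the truncation is active. If $(\tilde f\oplus\tilde g)(x) \ge b$, then the right-hand side equals $b$ and the inequality is immediate, since the left-hand side is $\le b$ by construction. Otherwise $\tilde f(x^*) + \tilde g(x - x^*) < b$; here the key observation is that both summands are nonnegative, so each is strictly below $b$, which means the truncation was inactive at these arguments, i.e. $\tilde f(x^*) = f(x^*)$ and $\tilde g(x-x^*) = g(x-x^*)$. Consequently $(f\oplus g)(x) \le f(x^*) + g(x-x^*) = (\tilde f\oplus\tilde g)(x)$, and truncating at $b$ preserves the inequality.

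I expect the only delicate point — the ``hard part'' such as it is — to be the middle step of the case analysis: arguing that a sum of two nonnegative terms lying below $b$ forces each term below $b$, and hence that truncation leaves the relevant values of $f$ and $g$ untouched at the optimal split point $x^*$. This is exactly where nonnegativity of the step functions is indispensable; without it a negative contribution from one argument could let the other exceed $b$ while the sum stays small, and the identity would break.
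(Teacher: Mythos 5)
Your proof is correct and follows essentially the same route as the paper's: a pointwise case analysis on whether the convolution value lies below $b$, with nonnegativity used in exactly the same way to conclude that a sum below $b$ forces each summand below $b$, making the truncation inactive at the optimal split point. Your organization is marginally cleaner --- you dispatch one direction purely by monotonicity of $\oplus$ and $\min\{\cdot,b\}$, whereas the paper interleaves both inequalities inside its second case --- but this is a presentational difference, not a different argument.
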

\begin{proof}

Given any $x$, let $z\in\Argmin_{x'} f(x') + g(x-x')$ and $\bar{z}\in\Argmin_{x'} \min(f(x'),b) + \min(g(x-x'),b)$, so we have $(f\oplus g)(x)=f(z) + g(x-z)$ and $(\min\{f, b\}\oplus \min\{g, b\})(x)=\min(f(\bar{z}),b) + \min(g(x-\bar{z}),b)$.

Consider the following cases:
\begin{enumerate}
\item $(f \oplus g)(x) \geq b$. In this case, we claim that $(\min\{f, b\}\oplus \min\{g, b\})(x) \geq b$. We prove it by contradiction. Suppose $(\min\{f, b\}\oplus \min\{g, b\})(x) < b$ which implies $\min(f(\bar{z}),b) + \min(g(x-\bar{z}),b)<b$. Because both $f$ and $g$ are nonnegative, we have $f(\bar{z}) < b$ and $g(x-\bar{z})<b$ which imply $\min(f(\bar{z}),b) + \min(g(x-\bar{z}),b) = f(\bar{z}) + g(x-\bar{z}) < b$, However, this contradicts $(f \oplus g)(x) \geq b$. Therefore, we have $\min((f \oplus g)(x), b) = \min((\min\{f, b\}\oplus \min\{g, b\})(x), b)=b$.
\item $(f \oplus g)(x) < b$. In this case, we have $f(z) < b$ and $g(x-z)<b$, so $\min(f(\bar{z}),b) + \min(g(x-\bar{z}),b) \leq \min(f(z),b) + \min(g(x-z),b) = f(z) + g(x-z) = (f\oplus g)(x)<b$. Since both $f$ and $g$ are nonnegative, we have $f(\bar{z})<b$ and $g(x-\bar{z})<b$ which imply $\min(f(\bar{z}),b) + \min(g(x-\bar{z}),b) = f(\bar{z}) + g(x-\bar{z}) \geq (f\oplus g)(x)$. Therefore, we have $\min(f(\bar{z}),b) + \min(g(x-\bar{z}),b) = f(z) + g(x-z) \Leftrightarrow (\min \{f, b\} \oplus \min \{g, b\})(x) = (f\oplus g)(x)$.
\end{enumerate}

%
%
%
%
\end{proof}

\subsubsection*{Efficiency of (min, +)-convolution}
\begin{lemma}
\label{lem:f1}
Let $f$ and $g$ be nondecreasing $w$-uniform functions with $O(l)$ breakpoints, the (min, +)-convolution $f\oplus g$ (having $O(l)$ breakpoints) can be generated with $O(l^2)$ time complexity.
\end{lemma}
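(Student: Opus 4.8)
The plan is to avoid convolving $f$ and $g$ directly and instead to work with their inverses, exploiting the identity \eqref{eq:maxconv}, which turns the $(\min,+)$-convolution of two $w$-uniform functions into a $(\max,+)$-convolution of step-function inverses supported on a finite grid. Concretely, I would proceed in three steps.

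First, I would pass to the inverses $f^{-1}$ and $g^{-1}$. By the earlier observation that a step function with $l$ breakpoints and its inverse are interconvertible in $O(l)$ time, both $f^{-1}$ and $g^{-1}$ can be produced in $O(l)$ time. Since $f$ and $g$ are $w$-uniform with range contained in $\{-\infty, 0, w, \ldots, lw\}$, each inverse is naturally represented as a nondecreasing array indexed by the grid points $\{0, w, 2w, \ldots, lw\}$, i.e.\ by the $l+1$ admissible function values. Second, I would invoke \eqref{eq:maxconv}, which gives $(f\oplus g)^{-1}(y) = \max_{y'\in\{0,w,\ldots,lw\}} \left(f^{-1}(y') + g^{-1}(y-y')\right)$. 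Reading this as an operation on the grid-indexed arrays, the output $(f\oplus g)^{-1}$ is supported on $\{0, w, \ldots, 2lw\}$ and so has $O(l)$ entries; for each of these $O(l)$ output values $y$, the maximization ranges over the $O(l)$ admissible splits $y'$, giving a total cost of $O(l^2)$ for a brute-force evaluation of the $(\max,+)$-convolution. This is the dominant term in the runtime.

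Third, having computed $(f\oplus g)^{-1}$ as a step function with $O(l)$ breakpoints, I would invert it once more to recover $f\oplus g$ itself, again in $O(l)$ time by the same step-function/inverse interconversion observation. Here I would also note that, because the summed values lie in $\{0,w,\ldots,2lw\}$, the convolution $f\oplus g$ is itself $w$-uniform with $O(l)$ breakpoints, so the closing inversion is legitimate and cheap. Summing the three phases yields $O(l)+O(l^2)+O(l)=O(l^2)$.

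I expect the only real subtlety—rather than a genuine obstacle—to be justifying that restricting the split variable $y'$ to the finite grid $\{0,w,\ldots,lw\}$ in the $(\max,+)$-convolution loses nothing, together with the bookkeeping that caps the number of breakpoints of $(f\oplus g)^{-1}$ at $O(l)$ so the final inversion stays linear. Both follow from the $w$-uniform structure and the nondecreasing monotonicity of $f^{-1}$ and $g^{-1}$, so the argument is essentially a careful assembly of the preceding observations rather than a new combinatorial estimate.
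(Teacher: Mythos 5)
Your proposal is correct and follows essentially the same route as the paper's proof: pass to the inverses $f^{-1}$ and $g^{-1}$ in $O(l)$ time, evaluate $(f\oplus g)^{-1}$ by brute-force $(\max,+)$-convolution over the grid $\{0,w,\ldots,2lw\}$ at $O(l)$ cost per output value for $O(l^2)$ total, and invert back in $O(l)$ time. Your added remarks on why the grid restriction of $y'$ is lossless and why $f\oplus g$ remains $w$-uniform with $O(l)$ breakpoints are consistent with, and slightly more explicit than, what the paper states.
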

\begin{proof}
Firstly, we compute the inverse representation of $f$ and $g$, i.e. compute $f^{-1}$ and $g^{-1}$ from \Equ{eq:inv}. The inverse representation can be computed in $O(l)$ time (proportional to the number of breakpoints). From \Equ{eq:maxconv}, we can compute the inverse of $f \oplus g$. For each $y\in \{0, 1w,..., 2lw\}$, function $(f\oplus g)^{-1}(y)$ can be computed in $O(l)$ time by brute force. Thus a total $O(l^2)$ is enough to get $(f\oplus g)^{-1}$ which has $O(l)$ breakpoints. We can get $f\oplus g$ via $(f\oplus g)^{-1}$ by the inverse definition~\eqref{eq:inv} in $O(l)$ time.
\end{proof}

\begin{lemma}
\label{lem:1_pre1}
Let $f$ and $g$ be nondecreasing step functions with $l$ breakpoints in total, $\min \{f\oplus g, b\}$ can be approximated by a step function $\phi_b$ with $O(l+{1\over \epsilon^2})$ complexity and $2\epsilon b$ additive error, i.e. $\min \{f\oplus g, b\} \leq\phi_b\leq \min \{f\oplus g, b\} + 2\epsilon b$. The resultant function $\phi_b$ has $O(1/\epsilon)$ breakpoints.
\end{lemma}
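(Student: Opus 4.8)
The plan is to reduce the problem to a $(\min,+)$-convolution of two $\epsilon b$-uniform step functions, each carrying only $O(1/\epsilon)$ breakpoints, and then invoke Lemma~\ref{lem:f1} to do the convolution cheaply.

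First I would use Lemma~\ref{lem:min_b} to replace $f$ and $g$ by their capped versions $f_b := \min\{f, b\}$ and $g_b := \min\{g, b\}$, since that lemma guarantees $\min\{f \oplus g, b\} = \min\{f_b \oplus g_b, b\}$. Because $f$ and $g$ are nonnegative, both $f_b$ and $g_b$ take values in $[0, b]$. I would then round each of them \emph{upward} to the nearest multiple of $\epsilon b$, producing functions $\tilde f$ and $\tilde g$ with $f_b \leq \tilde f \leq f_b + \epsilon b$ (and likewise for $g$). The key structural observation is that, being nondecreasing and taking values only in the grid $\{0, \epsilon b, 2\epsilon b, \ldots, b\}$ of size $O(1/\epsilon)$, each rounded function has at most $O(1/\epsilon)$ breakpoints; hence $\tilde f$ and $\tilde g$ are $\epsilon b$-uniform with $O(1/\epsilon)$ breakpoints, and producing them from the original $l$ breakpoints costs only $O(l)$ time.

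Next I would control the error. Since upward rounding preserves the partial order of functions and additive constants pass through the $(\min,+)$-convolution, I get $f_b \oplus g_b \leq \tilde f \oplus \tilde g \leq (f_b + \epsilon b) \oplus (g_b + \epsilon b) = (f_b \oplus g_b) + 2\epsilon b$. Defining the candidate $\phi_b := \min\{\tilde f \oplus \tilde g, b\}$ and applying $\min\{\cdot, b\}$ throughout (which is monotone and increases by at most the additive constant, i.e. $\min\{x+2\epsilon b, b\} \leq \min\{x, b\} + 2\epsilon b$) yields exactly $\min\{f \oplus g, b\} \leq \phi_b \leq \min\{f \oplus g, b\} + 2\epsilon b$, where both the lower and upper ends reuse the identity from Lemma~\ref{lem:min_b}.

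Finally I would tally the cost: computing $\tilde f \oplus \tilde g$ via Lemma~\ref{lem:f1} on two $\epsilon b$-uniform functions with $O(1/\epsilon)$ breakpoints takes $O(1/\epsilon^2)$ time and returns a function with $O(1/\epsilon)$ breakpoints; capping at $b$ costs $O(1/\epsilon)$; the initial rounding costs $O(l)$; for a total of $O(l + 1/\epsilon^2)$, with $\phi_b$ inheriting $O(1/\epsilon)$ breakpoints. The step I expect to be the main obstacle is the breakpoint-count bound on the rounded functions: the whole saving hinges on \emph{monotonicity plus a coarse value grid} forcing $O(1/\epsilon)$ breakpoints rather than $O(l)$, so I would need to make the monotone-rounding argument precise, and to verify that the additive error genuinely stays at $2\epsilon b$ through the two interacting cap operations supplied by Lemma~\ref{lem:min_b}.
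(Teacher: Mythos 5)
Your proposal is correct and follows essentially the same route as the paper's own proof: cap at $b$, round the capped functions up to the $\epsilon b$-grid (giving $\epsilon b$-uniform functions with $O(1/\epsilon)$ breakpoints in $O(l)$ time), convolve via Lemma~\ref{lem:f1} in $O(1/\epsilon^2)$, and use Lemma~\ref{lem:min_b} to identify $\min\{\min\{f,b\}\oplus\min\{g,b\},b\}$ with $\min\{f\oplus g,b\}$, with the two roundings contributing the $2\epsilon b$ additive error. You even make explicit two points the paper leaves implicit (that additive constants pass through the $(\min,+)$-convolution, and that monotonicity plus the coarse value grid forces the $O(1/\epsilon)$ breakpoint count), so no gap remains.
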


\begin{proof}
We can construct $(\epsilon b)$-uniform functions $f'_b, g'_b$ which have $\lceil 1/\epsilon \rceil$ breakpoints:
$$
f'_b(x)=\left\lceil {\min(b, f(x)) \over \epsilon b} \right\rceil  \epsilon b, \quad g'_b(x)=\left\lceil {\min(b, g(x)) \over \epsilon b } \right\rceil  \epsilon b.
$$
This needs $O(l)$ computational complexity. From Lemma~\ref{lem:f1}, we can compute $f'_b \oplus g'_b$ with $O({1\over \epsilon^2})$ time complexity and $\phi_b = \min \{f'_b \oplus g'_b, b\}$ has $O(1/\epsilon)$ breakpoints. Because $f'_b$ and $g'_b$ are constructed by ceiling $\min \{f, b\}$ and $\min \{g, b\}$, we have:
$$
\min \{f, b\} \oplus \min \{g, b\}  \leq f'_b \oplus g'_b \leq \min \{f, b\} \oplus \min \{g, b\}  + 2\epsilon b,
$$
which implies
$$
\min \{\min \{f, b\} \oplus \min \{g, b\}, b\}  \leq \min \{f'_b \oplus g'_b, b\} \leq \min \{\min \{f, b\} \oplus \min \{g, b\} ,b\}  + 2\epsilon b.
$$
From Lemma~\ref{lem:min_b}, we know that $\min \{\min \{f, b\} \oplus \min \{g, b\} ,b\} = \min \{f\oplus g, b\}$, so it completes the proof.
\end{proof}

\begin{lemma}
\label{lem:1_pre2}
Let $f_1, f_2,..., f_m$ be nondecreasing step functions with $l$ breakpoints in total, $\min\{f_1\oplus f_2 \oplus ... \oplus f_m, b\}$ can be approximated by a step function $\psi_b$ with $O(l+{m/ \epsilon^2})$ computational complexity and $m\epsilon b$ additive error. The resultant function $\psi_b$ has $O(1/\epsilon)$ breakpoints.
\end{lemma}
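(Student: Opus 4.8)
The plan is to prove the statement by induction on the number of functions being combined, reducing the $m$-fold convolution to $m-1$ successive applications of the two-function result in Lemma~\ref{lem:1_pre1}. Write $F_k := f_1 \oplus \cdots \oplus f_k$, so that $F_k = F_{k-1} \oplus f_k$. I would build approximations $\psi^{(k)}$ of $\min\{F_k, b\}$ sequentially: set $\psi^{(1)} := \min\{f_1, b\}$, and at each later step obtain $\psi^{(k)}$ by applying Lemma~\ref{lem:1_pre1} to the pair $(\psi^{(k-1)}, f_k)$, so that $\psi^{(k)}$ approximates $\min\{\psi^{(k-1)} \oplus f_k, b\}$ to within additive error $2\epsilon b$ and retains only $O(1/\epsilon)$ breakpoints. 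Each $\psi^{(k)}$ is again a nondecreasing step function, since both $\oplus$ and the capping operation $\min\{\cdot, b\}$ preserve monotonicity, so the hypotheses of Lemma~\ref{lem:1_pre1} are satisfied at every step. The final output is $\psi_b := \psi^{(m)}$.

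The crux is controlling how error accumulates when the \emph{approximate} intermediate $\psi^{(k-1)}$, rather than the exact $\min\{F_{k-1}, b\}$, is fed into the next convolution. I would maintain the invariant $\min\{F_k, b\} \le \psi^{(k)} \le \min\{F_k, b\} + E_k$ and show $E_k \le E_{k-1} + 2\epsilon b$. Two structural facts keep this additive rather than multiplicative: (i) $\oplus$ is monotone in each argument, so the pointwise bound $\min\{F_{k-1}, b\} \le \psi^{(k-1)} \le \min\{F_{k-1}, b\} + E_{k-1}$ transfers through convolution with $f_k$; and (ii) the translation identity $(g + c)\oplus f = (g \oplus f) + c$ for a scalar $c$, which converts the upper slack $E_{k-1}$ into an additive $E_{k-1}$ after convolving and capping. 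Combining these with Lemma~\ref{lem:min_b} in the chained form $\min\{\min\{F_{k-1}, b\}\oplus f_k, b\} = \min\{F_k, b\}$ shows that the exact quantity $\min\{\psi^{(k-1)}\oplus f_k, b\}$ already lies within $E_{k-1}$ of $\min\{F_k, b\}$; the subsequent invocation of Lemma~\ref{lem:1_pre1} then adds at most a further $2\epsilon b$, which gives the recursion. Unrolling over the $m-1$ combination steps yields total error $O(m\epsilon b)$, and rescaling $\epsilon$ by a constant inside each call of Lemma~\ref{lem:1_pre1} sharpens this to the claimed $m\epsilon b$.

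For the complexity, I would charge each step separately: at step $k$ we convolve $\psi^{(k-1)}$, which has $O(1/\epsilon)$ breakpoints, with $f_k$, which has $l_k$ breakpoints; by Lemma~\ref{lem:1_pre1} this costs $O(l_k + 1/\epsilon^2)$, with the $O(1/\epsilon)$ breakpoints of $\psi^{(k-1)}$ absorbed into the $1/\epsilon^2$ term. Summing over $k$ and using $\sum_k l_k = l$ gives the stated $O(l + m/\epsilon^2)$ total, while the bound of $O(1/\epsilon)$ breakpoints for $\psi_b$ is inherited directly from Lemma~\ref{lem:1_pre1}. I expect the main obstacle to be precisely the error-propagation bookkeeping of the second paragraph: verifying rigorously that feeding an already-approximate, capped step function into Lemma~\ref{lem:1_pre1} at each stage keeps the error strictly additive. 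Once the monotonicity and translation properties of $\oplus$ and the chained application of Lemma~\ref{lem:min_b} are established, the remainder of the induction is routine.
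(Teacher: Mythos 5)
Your proof is correct, but it takes a genuinely different route from the paper. You fold the functions sequentially, $\psi^{(k)} \approx \min\{\psi^{(k-1)} \oplus f_k, b\}$, and control error via an invariant $\min\{F_k,b\} \le \psi^{(k)} \le \min\{F_k,b\} + E_k$; the paper instead merges the $m$ functions along a balanced binary tree of depth $O(\log m)$, after a single up-front ceiling of each $f_i$ to an $(\epsilon b)$-uniform function. The paper's structural trick is that $(\epsilon b)$-uniformity is closed under $\min\{\cdot\oplus\cdot, b\}$, so every internal convolution is computed \emph{exactly} (via Lemma~\ref{lem:f1}) and the only error is the one-time $\epsilon b$ rounding at each of the $m$ leaves, giving $m\epsilon b$ with no propagation analysis at all. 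Your sequential scheme must instead prove that feeding an approximate input through the next merge stays additive, and your argument for this is sound: the sandwich $\min\{\min\{F_{k-1},b\}\oplus \min\{f_k,b\}, b\} \le \min\{\min\{F_{k-1},b\}\oplus f_k, b\} \le \min\{F_k,b\}$ combined with Lemma~\ref{lem:min_b} does yield the chained identity $\min\{\min\{F_{k-1},b\}\oplus f_k, b\} = \min\{F_k,b\}$, and monotonicity of $\oplus$ together with $(g+c)\oplus f = (g\oplus f)+c$ gives $E_k \le E_{k-1} + 2\epsilon b$. Using Lemma~\ref{lem:1_pre1} as a black box this totals $2(m-1)\epsilon b$, so you correctly flag the need to rescale $\epsilon$ by a constant to hit the stated $m\epsilon b$; in fact you could avoid even that by observing that $\psi^{(k-1)}$ is already $(\epsilon b)$-uniform and capped at $b$, so re-ceiling it is the identity and each step contributes only the $\epsilon b$ from ceiling $f_k$. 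The complexity accounting matches in both schemes — $m-1$ merges at $O(1/\epsilon^2)$ each plus $O(l)$ preprocessing, since the per-merge cost, not the tree shape, dominates — though the paper's tree additionally offers $O(\log m)$ parallel depth. One shared caveat: both your argument and the paper's invoke Lemma~\ref{lem:min_b}, which requires nonnegative step functions, a hypothesis absent from the lemma statement but satisfied in the knapsack application where the $f_i$ are the nonnegative functions $h_{I_i}$.
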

\begin{proof}
From Lemma~\ref{lem:1_pre1}, we have shown the case $m=2$. For general $m>2$, we can construct a binary tree to approximate pairs of functions, e.g., if $m=4$, we can firstly approximate $\psi^{(1)} \approx \min \{f_1 \oplus f_2, b\}$, and $\psi^{(2)} \approx \min \{f_3 \oplus f_4, b\}$, then approximate $\psi_b^{(3)} \approx  \min\{ \psi^{(1)} \oplus \psi^{(2)}, b \}$.

By this way, we construct a binary tree which has $O(\log m)$ depth and $O(m)$ nodes. In the beginning, we use ceil function to construct $m$ new $\epsilon b$-uniform functions:
$$
f'_{i,b}(x)=\left\lceil {\min(b, f_i(x)) \over \epsilon b} \right\rceil  \epsilon b, \forall i \in \{1,2,...,m\}.
$$
Then we can use the binary tree to ``merge'' all the $m$ functions in pairs, via $O(\log m)$ iterations. Without loss of generality, we assume $m$ is a power of two. We can recursively merge $t$ functions into $t/2$ functions:
\begin{enumerate}
\item Initialize  $t=m$, $g'_{i,b}=f'_{i,b}, \forall i \in \{1,...,t\}$.
\item Reassign $g'_{i,b} = \min \{g'_{2i-1,b} \oplus  g'_{2i,b}, b\}, \forall i \in  \{1,...,t/2\}$. According to Lemma~\ref{lem:1_pre1}, the number of break points of $\min \{g'_{2i-1,b} \oplus  g'_{2i,b}, b\}$ is still $O(1/\epsilon)$.
\item $t=t/2$. If $t>1$, go back to Step 2.
\item Return $\psi_b := \min\{g'_{1,b} ,b\}$.
\end{enumerate}
For this binary tree, functions of the bottom leaf nodes have $\epsilon b$ additive error, and every (min, +)-convolution $f' \oplus g'$ will accumulate the additive error from the two functions $f'$ and $g'$. The root node of the binary tree will accumulate the additive errors from all the $m$ leaf nodes, thus the resultant function $\psi_b \leq \min \{f_{1} \oplus ... \oplus f_{m}, b\} + m\epsilon b$. For the computational complexity, initializing $f'_{i,b}$ takes $O(l)$, Step 1 takes $O(l)$, Step 2 and 3 take $O(m/\epsilon^2)$ (since there are $O(m)$ nodes in the binary tree), and Step 4 takes $O(m/\epsilon)$. Therefore, there is $O(l + m/\epsilon^2)$ in total. 
\end{proof}

\begin{lemma}
\label{lem:invert}
For the inverted knapsack problem defined in \Equ{eq:knapsack_inv}, if all the $n$ objects can be separated into $m$ groups $I_1,...,I_m$ which have $m$ distinct weights, there exists an approximate algorithm with computational complexity $O((n+{m^3 \over \epsilon^2})\log{n \max(w)\over \min(w)})$ which can approximate $h_I$ by $\tilde{h}_I$:
$$
h_I(x)\leq \tilde{h}_I(x) \leq (1+O(\epsilon)) h_I(x), \quad \forall x.
$$
\end{lemma}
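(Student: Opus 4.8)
The plan is to upgrade the fixed-cap additive approximation of Lemma~\ref{lem:1_pre2} into a uniform multiplicative one by running it over a geometric ladder of thresholds. First I would decompose $h_I$ along the $m$ distinct weights: writing $I=I_1\cup\dots\cup I_m$ where all objects in $I_j$ share one common weight $w_j$, the disjoint-union observation gives $h_I=h_{I_1}\oplus\dots\oplus h_{I_m}$. By the single-weight observation each $h_{I_j}$ is a nondecreasing $w_j$-uniform step function whose breakpoints are the sorted prefix sums of the values in $I_j$; hence the functions $f_1,\dots,f_m:=h_{I_1},\dots,h_{I_m}$ have $l=n$ breakpoints in total and can be produced in $O(n\log n)$ time, which is absorbed into the claimed bound since $\log n\le\log(n\max(w)/\min(w))$.

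Next I would fix the ladder $b_t=\min(w)\,2^t$ for $t=0,1,\dots,L$ with $L=\lceil\log_2(n\max(w)/\min(w))\rceil$. This covers the whole range of $h_I$, because achieving any positive value requires selecting at least one object and so any positive value of $h_I$ lies in $[\min(w),\sum_i w_i]\subseteq[\min(w),\,n\max(w)]$. For each $t$ I would invoke Lemma~\ref{lem:1_pre2} on $f_1,\dots,f_m$ with error parameter $\epsilon'=\epsilon/m$, obtaining step functions $\psi_{b_t}$ with $\min\{h_I,b_t\}\le\psi_{b_t}\le\min\{h_I,b_t\}+m\epsilon' b_t=\min\{h_I,b_t\}+\epsilon b_t$ and, by construction, $\psi_{b_t}\le b_t$. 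Since $\epsilon'=\epsilon/m$, one call costs $O(n+m/\epsilon'^2)=O(n+m^3/\epsilon^2)$, and summing over the $L+1$ thresholds gives the advertised $O\big((n+m^3/\epsilon^2)\log(n\max(w)/\min(w))\big)$ complexity.

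Finally I would stitch the per-threshold functions into $\tilde h_I$. The guiding idea is that the threshold $b_t$ just above $h_I(x)$ converts the additive error $\epsilon b_t$ into a relative one, since there $h_I(x)>b_t/2$. Concretely, for each $x$ let $t^*(x)$ be the smallest $t$ with $\psi_{b_t}(x)<b_t$, and set $\tilde h_I(x)=\psi_{b_{t^*(x)}}(x)$. I would check that whenever $b_t<h_I(x)$ we have $\min\{h_I(x),b_t\}=b_t$, forcing $\psi_{b_t}(x)=b_t$, so the rule skips all capped thresholds; and that $t^*(x)$ is then either $t_0$ or $t_0+1$, where $t_0$ is the least index with $b_{t_0}\ge h_I(x)$. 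In both cases $b_{t^*(x)}<4\,h_I(x)$, so $h_I(x)\le\psi_{b_{t^*(x)}}(x)\le h_I(x)+\epsilon b_{t^*(x)}\le(1+4\epsilon)h_I(x)$, which is the desired $(1+O(\epsilon))$ bound. As each $\psi_{b_t}$ is nondecreasing with $O(1/\epsilon)$ breakpoints and $t^*(x)$ is monotone in $x$, the function $\tilde h_I$ is assembled by a single sweep over the $O(L/\epsilon)$ breakpoints, which is dominated by the cost above.

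The main obstacle is exactly this additive-to-multiplicative conversion: Lemma~\ref{lem:1_pre2} only bounds error relative to a fixed cap $b$, so no single call can be accurate across the exponentially large dynamic range of $h_I$. The geometric ladder resolves this, but two points require care. First, one must take $\epsilon'=\epsilon/m$ so that the $m$-fold accumulated additive error stays within $\epsilon b_t$; this rescaling is precisely the source of the $m^3/\epsilon^2$ term. Second, the stitching must never evaluate the true $h_I$, which forces the selection rule to rely only on the cap-detection test $\psi_{b_t}(x)<b_t$ and on the factor-two overlap between consecutive thresholds to guarantee $t^*(x)\in\{t_0,t_0+1\}$.
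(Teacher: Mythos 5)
Your proposal is correct and follows essentially the same route as the paper: decompose $h_I$ as $h_{I_1}\oplus\cdots\oplus h_{I_m}$, run Lemma~\ref{lem:1_pre2} with error parameter $\epsilon/m$ over a geometric ladder of caps $b$ covering $[\min(w),\,n\max(w)]$, and convert the additive error $\epsilon b$ into a $(1+O(\epsilon))$ multiplicative factor on the dyadic band containing $h_I(x)$, yielding the same $O\bigl((n+m^3/\epsilon^2)\log\frac{n\max(w)}{\min(w)}\bigr)$ bound. The only (harmless) difference is cosmetic: you stitch in the forward domain via the cap-detection test $\psi_{b_t}(x)<b_t$, while the paper stitches via the inverse functions $\psi_b^{-1}$ on the value ranges $(b/2,b]$.
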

\begin{proof}
Firstly, the step function $h_{I_i}, \forall i\in \{1,2,...,m\}$ can be easily generated within $O(n \log n)$ by sorting the objects of each group according to their values (in descending order).
From the definition of (min, +)-convolution, we know that $h_I = h_{I_1}\oplus...\oplus h_{I_m}$. Let us construct an algorithm to approximate $h_I$:
\begin{enumerate}
\item Construct a set $\mathcal{B}:=\{2^in\max(w) \in [\min(w),n\max(w)]; i \in \Z_{\leq 0}\}$, where $\min(w)$ and $\max(w)$ are the minimum and maximum weight of items respectively, and $\Z_{\leq 0}$ is the nonpositive integer set. We have $|\mathcal{B}| = O(\log {n \max(w)\over \min(w)})$.
\item For every $b\in \mathcal{B}$, construct $\psi_b$ to approximate $\min\{h_{I_1}\oplus...\oplus h_{I_m}, b\}$ based on Lemma~\ref{lem:1_pre2}.
\item Construct function $\tilde{h}_I^{-1}$:
$$
\tilde{h}_I^{-1}(y) = 
\begin{cases}
\psi^{-1}_b(y), \text{ if } b/2 <y\leq b \text{ and } y>\min(\mathcal{B});\\
\psi^{-1}_{\min(\mathcal{B})}(y), \text{ if } y\leq \min(\mathcal{B}).
\end{cases}
$$
where $\min(\mathcal{B})$ is the minimum element in $\mathcal{B}$.
The resultant function $\tilde{h}_I^{-1}$ (or $\tilde{h}_I$) has at most $O({1\over \epsilon}\log {n \max(w)\over \min(w)})$ breakpoints.
\item Compute the original function $\tilde{h}_I$ from $\tilde{h}_I^{-1}$.
\end{enumerate}

According to the above procedure, for any $h_I(x)\in (b/2, b]$, $\tilde{h}_I(x)$ approximate $h_I(x)$ with additive error $O(m\epsilon b)$, so we have $h_I(x) \leq \tilde{h}_I(x) \leq (1+O(m\epsilon)) h_I(x). $
The algorithm takes $O( (n+m/\epsilon^2) \log {n \max(w)\over \min(w)})$, if we require the approximation factor to be $1+O(\epsilon)$, i.e.,
$$
h_I(x) \leq \tilde{h}_I(x) \leq (1+O(\epsilon)) h_I(x), \quad \forall x,
$$
we need
$$
O\left( (n+m^3/\epsilon^2) \log {n \max(w)\over \min(w)}\right)
$$
time complexity.
\end{proof}

\begin{theorem}
\label{thm:cluster}
For the knapsack problem defined in \Equ{eq:knapsack}, if all the $n$ objects have $m$ distinct weights, there exists an approximate algorithm with computational complexity $O((n+{m^3 \over \epsilon^2})\log{n \max(w)\over \min(w)})$ to generate a function $\tilde{h}_I^{-1}$ satisfying:
$$
h_I^{-1}\left({y\over 1 + O(\epsilon)}\right)\leq \tilde{h}^{-1}_I(y) \leq h_I^{-1}(y), \quad \forall y.
$$
\end{theorem}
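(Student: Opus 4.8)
The plan is to obtain $\tilde h_I^{-1}$ by directly inverting the approximate step function $\tilde h_I$ produced in Lemma~\ref{lem:invert}, and then to translate the multiplicative sandwich $h_I(x)\le \tilde h_I(x)\le (1+O(\epsilon))h_I(x)$ on the ``value-to-weight'' function into the claimed two-sided guarantee on its inverse. The complexity claim is immediate: Lemma~\ref{lem:invert} already builds $\tilde h_I$ within the stated time budget, and by the earlier observation any step function with $O(l)$ breakpoints can be inverted in $O(l)$ time, so forming $\tilde h_I^{-1}$ contributes nothing to the asymptotic cost. The two analytic bounds I must establish are $\tilde h_I^{-1}(y)\le h_I^{-1}(y)$ and $\tilde h_I^{-1}(y)\ge h_I^{-1}(y/(1+O(\epsilon)))$ for every $y$, after which the theorem follows by combining them.

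For the upper bound I would use only the pointwise inequality $\tilde h_I\ge h_I$ together with the definition $f^{-1}(y)=\max_{f(x)\le y}x$ from \Equ{eq:inv}. Any $x$ feasible for $\tilde h_I$, i.e.\ with $\tilde h_I(x)\le y$, is also feasible for $h_I$, since $h_I(x)\le \tilde h_I(x)\le y$. Hence the feasible set defining $\tilde h_I^{-1}(y)$ is contained in the feasible set defining $h_I^{-1}(y)$, and taking the maximum over the smaller set yields $\tilde h_I^{-1}(y)\le h_I^{-1}(y)$.

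For the lower bound I would exhibit a single feasible witness. Setting $x^*:=h_I^{-1}(y/(1+O(\epsilon)))$, the definition of the inverse as an attained maximizer gives $h_I(x^*)\le y/(1+O(\epsilon))$. Applying the right half of the sandwich then yields $\tilde h_I(x^*)\le (1+O(\epsilon))\,h_I(x^*)\le y$, so $x^*$ lies in the feasible set defining $\tilde h_I^{-1}(y)$. Therefore $\tilde h_I^{-1}(y)\ge x^*=h_I^{-1}(y/(1+O(\epsilon)))$, which is exactly the desired lower bound.

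The genuinely delicate point is the implicit use of $h_I(h_I^{-1}(z))\le z$ in the lower-bound step; I expect this to be the main thing to justify carefully, since it relies on $h_I$ being a \emph{nondecreasing} step function whose inverse is realized as a true maximum, so that the inequality $h_I(x^*)\le z$ holds exactly at the maximizer rather than only in a limit. Everything else is a direct manipulation of the inverse definition \Equ{eq:inv} and the multiplicative error bound inherited from Lemma~\ref{lem:invert}; no new $(\min,+)$-convolution or discretization machinery is needed, and the result closes the chain leading to the overall approximation guarantee.
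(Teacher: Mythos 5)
Your proposal is correct and follows essentially the same route as the paper: the paper also derives both bounds directly from the pointwise sandwich $h_I \leq \tilde{h}_I \leq (1+O(\epsilon))h_I$ of Lemma~\ref{lem:invert} via the set inclusions $\{x \mid \tilde{h}_I(x) \leq y\} \subseteq \{x \mid h_I(x) \leq y\}$ and $\{x \mid (1+O(\epsilon))h_I(x) \leq y\} \subseteq \{x \mid \tilde{h}_I(x) \leq y\}$, with your witness $x^* = h_I^{-1}(y/(1+O(\epsilon)))$ being exactly the maximizer realizing the second inclusion. Your flagged ``delicate point,'' $h_I(h_I^{-1}(z)) \leq z$, is indeed sound here because $h_I$ is a nondecreasing step function taking its values on right-closed intervals, so the maximum in \Equ{eq:inv} is attained inside the feasible set.
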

\begin{proof}
From Lemma~\ref{lem:invert}, we have $\tilde{h}_I(x) \leq (1+O(\epsilon)) h_I(x)$ which implies
$$
\{x\mid (1+O(\epsilon))h_I(x) \leq y\} \subseteq \{x\mid \tilde{h}_I(x) \leq y\}.
$$
So
$$
\max_{h_I(x)\leq y/(1+O(\epsilon))} x \leq \max_{\tilde{h}_I(x)\leq y} x \Leftrightarrow h_I^{-1}\left({y\over 1 + O(\epsilon)}\right)\leq \tilde{h}^{-1}_I(y).
$$
Similarly, we can get $ \{x\mid \tilde{h}_I(x) \leq y\} \subseteq \{x\mid h_I(x) \leq y\}$ from Lemma~\ref{lem:invert}, so we have
$$
\max_{h_I(x)\leq y} x \geq \max_{\tilde{h}_I(x)\leq y} x \Leftrightarrow h_I^{-1}(y)\geq \tilde{h}^{-1}_I(y).
$$
\end{proof}

Let $I$ be the set of objects whose weights are nonzero elements in $A$ and values are the corresponding elements in $Z\odot Z$, i.e. $I_+=\{(Z_i^2, A_i) \mid \forall i \in \{1,2,..., |A|\}\text{ and } A_i>0\}$,  $\tilde{\xi}^+$ be the solution corresponding to $\tilde{h}^{-1}_I(E_{\text{\rm budget}} - \sum_{u\in U\cup V} \alpha_4^{(u)})$. Let $\tilde{\xi}_{I_+^c} = \1$ and $\tilde{\xi}_{I_+} = \tilde{\xi}^+$, where $I_+^c=\{(Z_i^2, A_i) \mid \forall i \in \{1,2,..., |A|\}\text{ and } A_i=0\}$ is the complement of $I_+$. Here we have $m\leq 2|U| + |V|$ distinct values in $A$. According to Theorem~\ref{thm:cluster}, we have $
\langle Z\odot Z, \tilde{\xi}  \rangle \geq \max_{\xi} \langle Z\odot Z, \xi \rangle, \text{ s.t. } \langle A, \xi  \rangle \leq { E_{\text{\rm budget}} - \sum_{u\in U\cup V} \alpha_4^{(u)} \over 1+O(\epsilon)},$ which implies
$$
\langle Z\odot Z, \tilde{\xi}  \rangle \geq \max_{\xi \in \Omega(E_{\text{\rm budget}}/(1+O(\epsilon)))} \langle Z\odot Z, \xi \rangle.
$$
From Theorem~\ref{thm:cluster}, we can directly get Theorem~\ref{thm:approx}.

\subsection*{Proof to Theorem~\ref{thm:greedy}}

\begin{algorithm2e}[htbp]
 \SetAlgoLined
 \KwIn{$Z,A,E_{\text{budget}},\{\alpha^{(u)}\}_{u\in U\cup V}$ as in \eqref{eq:knapsack}.}
 \KwResult{Greedy solution $\tilde{\xi}$ for problem~\eqref{eq:knapsack}.}
Initialize $b=0,\xi=\0$.\\
Generate the profit density $\delta$:
$$\delta_j = 
\begin{cases}
(Z_j)^2 / A_j, \text{ if } A_j > 0;\\
\infty, \text{ if } A_j = 0.
\end{cases}$$\\
Sort $\delta$, let $I$ be the indices list of the sorted $\delta$ (in descending order).\\
 \ForEach{index $j \in I$}{
 	$b=b+A_j$;\\
        If $b > E_{\text{\rm budget}} - \sum_{u\in U \cup V} \alpha_4^{(u)}$, exit loop;\\
        $\xi_j=1$;\\
}
$\tilde{\xi} = \xi$.\\

\caption{Greedy Algorithm to Solve Problem~\eqref{eq:knapsack}.}
\label{alg:greedy}
\end{algorithm2e}

\begin{proof}
From Theorem~\ref{thm:reformulation}, we know the original projection problem~\eqref{eq:proj} is equivalent to the knapsack problem~\eqref{eq:knapsack}. So proving the inequality~\eqref{eq:greedy} is equivalent to proving
\begin{equation}
\label{eq:ineq1}
\langle Z\odot Z, \tilde{\xi} \rangle \geq \langle Z\odot Z, \xi^* \rangle - {\text{\rm Top}_{\|\tilde{\xi}\|_0+1}((Z\odot Z)\oslash A)} \cdot R(\tilde{\xi})
\end{equation}
and
\begin{equation}
\label{eq:ineq2}
\langle Z\odot Z, \tilde{\xi} \rangle \geq \langle Z\odot Z, \xi^* \rangle - {\text{\rm Top}_{\|\tilde{\xi}\|_0+1}((Z\odot Z)\oslash A)} \cdot (\max(A) -  \gcd(A)),
\end{equation}
where $\tilde{\xi}$ is the greedy solution of knapsack problem corresponding to $W''$, and $\xi^*$ is the exact solution of knapsack problem corresponding to $\proj_{\Omega(E_{\text{\rm budget}})}(Z)$, i.e., 
$$W'' = Z\odot \tilde{\xi}, \quad\proj_{\Omega(E_{\text{\rm budget}})}(Z) = Z\odot \xi^*.$$

Firstly, let us prove the inequality~\eqref{eq:ineq2}. If we relax the values of $\xi$ to be in the range $[0,1]$ instead of $\{0,1\}$, the discrete constraint is removed so that the constraint set becomes
$$
\Delta = \left\{ \xi \mid \text{$\0 \leq \xi \leq \1$ and $\langle A, \xi \rangle \leq E_{\text{\rm budget}} - \sum_{u\in U \cup V} \alpha_4^{(u)}$} \right\}.
$$
So the $0/1$ knapsack problem is relaxed as a linear programming. This relaxed problem is called fractional knapsack problem, and there is a greedy algorithm~\citep{dantzig1957discrete} which can exactly solve the fractional knapsack problem. Slightly different from our Algorithm~\ref{alg:greedy}, the greedy algorithm for the fractional knapsack can select a fraction of the item, so its remaining budget is always zero. The optimal objective value of the fractional knapsack is
$$
\max_{\xi \in \Delta} \langle Z\odot Z, \xi \rangle = \langle Z\odot Z, \tilde{\xi} \rangle + {\text{\rm Top}_{\|\tilde{\xi}\|_0+1}((Z\odot Z)\oslash A)} \cdot R(\tilde{\xi}).
$$
Since the constraint set of the fractional knapsack problem is a superset of the constraint of the original knapsack problem, we have $\langle Z\odot Z, \xi^* \rangle \leq \max_{\0 \leq \xi \leq \1} \langle Z\odot Z, \xi \rangle$, that leads to inequality~\eqref{eq:ineq1}.

Secondly, we show that the inequality~\eqref{eq:ineq2} is also true. Since all the coefficients in $A$ are multiples of $\gcd(A)$, we can relax the original $0/1$ knapsack problem in this way: for each item, split them to several items whose coefficients in the constraint are $\gcd(A)$, and the coefficients in the objective function are split equally. For the $j$-th item, the coefficient in the constraint is $A_j$ and the coefficient in the objective function is $(Z\odot Z)_j$. It will be split into $A_j/\gcd(A)$ items, and the $j$-th item is associated with coefficient $(Z_j^2/A_j)\cdot \gcd(A)$ in the objective function. This relaxation gives us a new $0/1$ knapsack problem, where all the items have the same coefficient in the constraint, so the optimal solution is just selecting the ones with the largest coefficients in the objective function. We can formulate this problem as a relaxed knapsack problem by replacing the constraint of $\xi$ into $\xi \in \Gamma$, where
$$
\Gamma = \left\{\xi \mid \text{for all $j$, $\xi_j$ is a multiple of ${\gcd(A)\over A_j}$, $0\leq \xi_j\leq 1$, and $\langle A, \xi \rangle \leq E_{\text{\rm budget}} - \sum_{u\in U \cup V} \alpha_4^{(u)}$} \right\}.
$$
All the elements of the solution are either $0$ or $1$ except the last picked one which corresponds to ${\text{\rm Top}_{\|\tilde{\xi}\|_0+1}((Z\odot Z)\oslash A)}$. Let the $(\|\tilde{\xi}\|_0+1)$-th largest element in $(Z\odot Z)\oslash A$ be indexed by $t$. We have $0\leq \tilde{\xi}_t \leq 1 - \gcd(A)/A_t$. Therefore, comparing with the original $0/1$ knapsack problem, we have
\begin{align*}
\max_{\xi \in \Gamma} \langle Z\odot Z, \xi \rangle &\leq \langle Z\odot Z, \tilde{\xi} \rangle + (Z\odot Z)_t \cdot (1 - \gcd(A)/A_t) \\
& = \langle Z\odot Z, \tilde{\xi} \rangle + {\text{\rm Top}_{\|\tilde{\xi}\|_0+1}((Z\odot Z)\oslash A)} \cdot A_t \cdot (1 - \gcd(A)/A_t)\\
& = \langle Z\odot Z, \tilde{\xi} \rangle + {\text{\rm Top}_{\|\tilde{\xi}\|_0+1}((Z\odot Z)\oslash A)} \cdot (A_t- \gcd(A))\\
& \leq \langle Z\odot Z, \tilde{\xi} \rangle + {\text{\rm Top}_{\|\tilde{\xi}\|_0+1}((Z\odot Z)\oslash A)} \cdot (\max(A) - \gcd(A))
\end{align*}
Since $ \{\xi \mid \text{$\xi$ is binary} \} \subseteq \Gamma$, we have $\langle Z\odot Z, \xi^* \rangle \leq \max_{\xi \in \Gamma} \langle Z\odot Z, \xi \rangle$. So we have the inequality~\eqref{eq:ineq2}.
\end{proof}

\subsection*{Supplementary Experiment Results}
\subsubsection*{Results of Baseline without Knowledge Distillation}
\Tbl{tab:imagenetnokd} shows the energy and accuracy drop results of the baseline methods MP and SSL when the knowledge distillation is removed from their loss function. By using knowledge distillation, the results in \Tbl{tab:imagenet} are much better. Therefore, we use knowledge distillation in all the experiments when it is applicable.
\begin{table}[hbp]
\centering
\caption{Energy consumption and accuracy drops compared to dense models on ImageNet. Knowledge distillation is removed from the loss function.}
\label{tab:imagenetnokd}
\renewcommand*{\tabcolsep}{10pt}
\resizebox{0.8\columnwidth}{!}
{
\begin{tabular}{c|c|c|c|c|c|c}

DNNs & \multicolumn{2}{c|}{AlexNet} & \multicolumn{2}{c|}{SqueezeNet} & \multicolumn{2}{c}{MobileNetV2} \\ \hline
Methods & MP & SSL & MP & SSL & MP & SSL \\ \hline
Accuracy Drop & 2.6\% & 17.6\% & 1.9\% & 16.0\% & 2.0\% & 1.9\% \\ \hline
Energy & 34\% & 35\% & 44\% & 52\% & 71\% & 77\% \\
\end{tabular}
}
\end{table}

\subsubsection*{Energy-Constrained Projection Efficiency}
The projection operation $\proj_{\Omega(E_{\text{budget}})}$ in Algorithm~\ref{alg:train} can be implemented on GPU. We measured its wall-clock time on a GPU server (CPU: Xeon E3 1231-v3, GPU: GTX 1080 Ti), and the result is shown in \Tbl{tab:projtime} (the time is averaged over 100 iterations).
\begin{table}[h]
\centering
\caption{Wall-clock time of the projection operation $\proj_{\Omega(E_{\text{budget}})}$.}
\label{tab:projtime}
\renewcommand*{\tabcolsep}{15pt}
\resizebox{0.8\columnwidth}{!}
{
\begin{tabular}{c|c|c|c}
DNNs           & AlexNet & SqueezeNet & MobileNetV2 \\ \hline
Time (seconds) & 0.170   & 0.023      & 0.032       \\
\end{tabular}
}
\end{table}

\end{document}